\documentclass{article} %
\usepackage{iclr2027_conference,times}

\usepackage{amsmath,amsfonts,bm}

\def\eqref#1{equation~\ref{#1}}
\def\Eqref#1{Equation~\ref{#1}}

\def\1{\bm{1}}

\def\vf{{\bm{f}}}

\def\vu{{\bm{u}}}
\def\vv{{\bm{v}}}

\def\vx{{\bm{x}}}

\DeclareMathAlphabet{\mathsfit}{\encodingdefault}{\sfdefault}{m}{sl}
\SetMathAlphabet{\mathsfit}{bold}{\encodingdefault}{\sfdefault}{bx}{n}

\newcommand{\KL}{D_{\mathrm{KL}}}

\usepackage{hyperref}
\usepackage{url}
\usepackage{booktabs}
\usepackage{multirow}
\usepackage{algorithm}
\usepackage{algpseudocode}
\usepackage{amsthm, amssymb}
\usepackage{thmtools,thm-restate}
\usepackage{colortbl}
\usepackage{placeins}
\usepackage{graphicx}
\usepackage{subcaption}
\usepackage{xcolor}
\usepackage{wrapfig}
\usepackage{soul}
\usepackage{longtable}
\usepackage{makecell}
\usepackage{array}
\usepackage{float}
\usepackage{tabularx}
\usepackage{xfp}

\def\eqref#1{Eq.~(\ref{#1})}
\def\Eqref#1{Eq.~(\ref{#1})}
\hypersetup{colorlinks=true, citecolor=black, linkcolor=black, urlcolor=black}
\definecolor{lavendertext}{HTML}{8A3FFC}   
\definecolor{lavender}{HTML}{E6E6FA}
\newsavebox{\ablA}\newsavebox{\ablB}\newsavebox{\ablC}

\algrenewcommand{\algorithmiccomment}[1]{\hfill\textcolor{gray}{$\triangleright$ \textit{#1}}}

\title{FastOPD: On-Policy Distillation for \\ Lightweight VLA Deployment}

\iclrfinalcopy 

\author{%
Yoojin Oh$^{1*}$, Jeongsol Kim$^{1*}$, Yeonwoo Seo$^{1}$, Jangho Park$^{1}$, Seonghyun Jin$^{1}$, \\
\textbf{Sunwoo Park$^{1}$, Youngmin Kim$^{1}$, Youngjun Jun$^{1}$, Kyumin Choi$^{2}$, Jong Chul Ye$^{1}$} \\
$^{1}$KAIST \quad $^{2}$Sungkyunkwan University
}

\begin{document}

\maketitle
\lhead{Preprint.}
{\renewcommand{\thefootnote}{*}\footnotetext{Equal contribution.\hfill Project page is available \href{https://fastopd.github.io/}{\textcolor{lavendertext}{here}}.}}

\begin{abstract}
Vision-Language-Action (VLA) foundation models have scaled rapidly to enhance manipulation performance and generalizability, but this scaling incurs high computational costs that render real-world deployment increasingly challenging.
Existing approaches typically mitigate this issue by designing smaller architectures or reducing the iterative denoising steps in flow-based policies.
In this work, we propose \textit{FastOPD}, a foundation-to-lightweight VLA framework that enables the practical deployment of large-scale VLAs through efficient on-policy distillation.
Specifically, FastOPD adapts a flow map for single-state teacher supervision and combines it with a self-consistency objective to construct a compact student that learns the teacher dynamics.
Furthermore, we theoretically demonstrate that minimizing this objective allows the distilled student to recover a distribution on par with that induced by an ideal few-step teacher model.
We evaluate FastOPD across diverse foundation policies in simulation and real-world experiments.
On LIBERO, FastOPD retains 84\% of the performance of $\pi_{0.5}$ with only two inference steps, reducing inference latency by 78.1\% while outperforming existing few-step distillation baselines in average success rate. With LingBot-VLA as the teacher, FastOPD improves the single-step success rate over the base student by 15.9 percentage points on RoboTwin~2.0. We further demonstrate its applicability to a World Action Model (WAM) and deploy a compact student distilled from MolmoAct2 on a real robot.

\end{abstract}
\begin{figure}[h]
    \centering
    \includegraphics[width=0.84\linewidth]{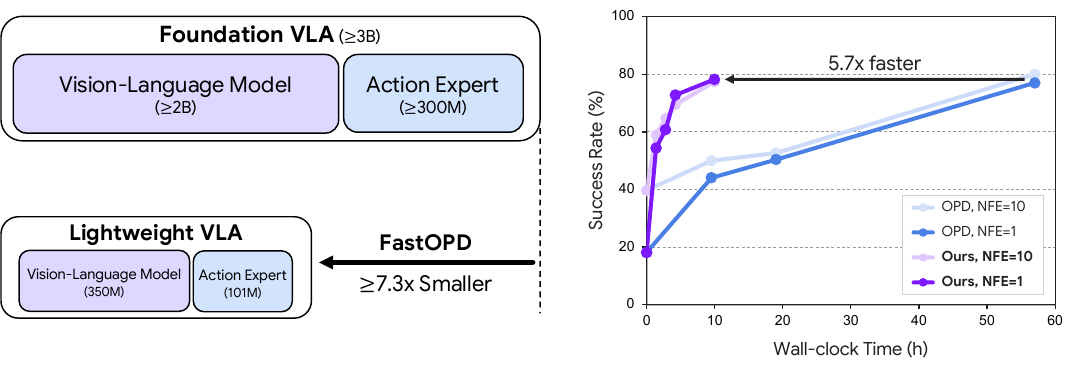}
    \vspace{-3.5mm}
    \caption{
    \textbf{Foundation-to-lightweight VLA policy distillation with FastOPD.} (Left) Our method enables a small VLA to inherit large-scale VLA knowledge. (Right) FastOPD reaches comparable performance to conventional on-policy distillation~\citep{li2026diffusionopd} 5.7$\times$ faster.
     }
     
\label{fig:figure1}
\end{figure}

\section{Introduction}
Recent progress in robot learning has moved toward large-scale foundation policies, ranging from Vision-Language-Action (VLA) models~\citep{o2024open,black2024pi_0,intelligence2025pi_,bjorck2025gr00t,wu2026pragmatic,fang2026molmoact2actionreasoningmodels} to emerging World Action Models (WAMs)~\citep{kim2026cosmos,ye2026world,yuan2026fast}. In particular, recent methods such as DreamZero~\citep{ye2026world} and Cosmos Policy~\citep{kim2026cosmos} leverage pretrained video generation models to capture rich visual dynamics and achieve strong generalization and manipulation performance.
However, these powerful foundation priors also lead to increasingly large policy backbones and high computational cost, creating a tension between model capability and deployment efficiency on physical robots.
Lightweight VLA models such as SmolVLA~\citep{shukor2025smolvla} address the efficiency side of this trade-off by directly designing smaller policies with less than 0.5B parameters.

A complementary direction is to transfer the capability of a powerful foundation policy into a compact student through knowledge distillation.
Knowledge Distillation (KD)~\citep{hinton2015distilling, mansourian2025comprehensive} has been widely studied for transferring the capability of a large teacher model into a smaller student. Specifically, On-Policy Distillation (OPD)~\citep{agarwal2024policy} supervises the student using teacher predictions on its own generated trajectories, reducing distribution mismatch and compounding errors from offline distillation. OPD also provides dense supervision along the student trajectory, in contrast to the sparse rewards commonly used in reinforcement learning. While OPD has been actively explored in LLM post-training, recent works have extended it to diffusion and flow models~\citep{li2026diffusionopd,fang2026flow}.
However, directly applying OPD to flow-based policies is computationally inefficient. It requires rolling out the student's denoising trajectory and repeatedly evaluating the teacher at the visited timesteps, resulting in high training cost for large foundation teachers.

In this work, we revisit few-step flow-map models through the lens of
on-policy distillation (OPD) and introduce \textbf{FastOPD}, a
foundation-to-lightweight VLA framework for efficient policy distillation.
Our goal is to transfer the capabilities of a large-scale VLA into a
compact student while substantially reducing both training and inference
costs, without sacrificing policy performance
(see Fig.~\ref{fig:figure1}).
To reduce the training cost of conventional OPD, FastOPD combines
on-policy teacher supervision of the student's velocity field with a
self-consistency objective~\citep{frans2025one}.
Rather than querying teacher supervision throughout an entire
generation trajectory, the self-consistency objective propagates local
velocity supervision to finite-interval shortcut transitions, as
illustrated in Fig.~\ref{fig:conceptual-figure}.
This allows the student to approximate the teacher dynamics without
explicitly constructing or evaluating a teacher flow map over each
shortcut transition. 
We further show theoretically that FastOPD implicitly recovers the
distribution induced by the teacher's ideal shortcut model, providing a
distributional justification for few-step distillation. 

\begin{figure}[t]
    \centering
    \begin{subfigure}{0.465\linewidth}
        \centering
        \includegraphics[width=\linewidth]{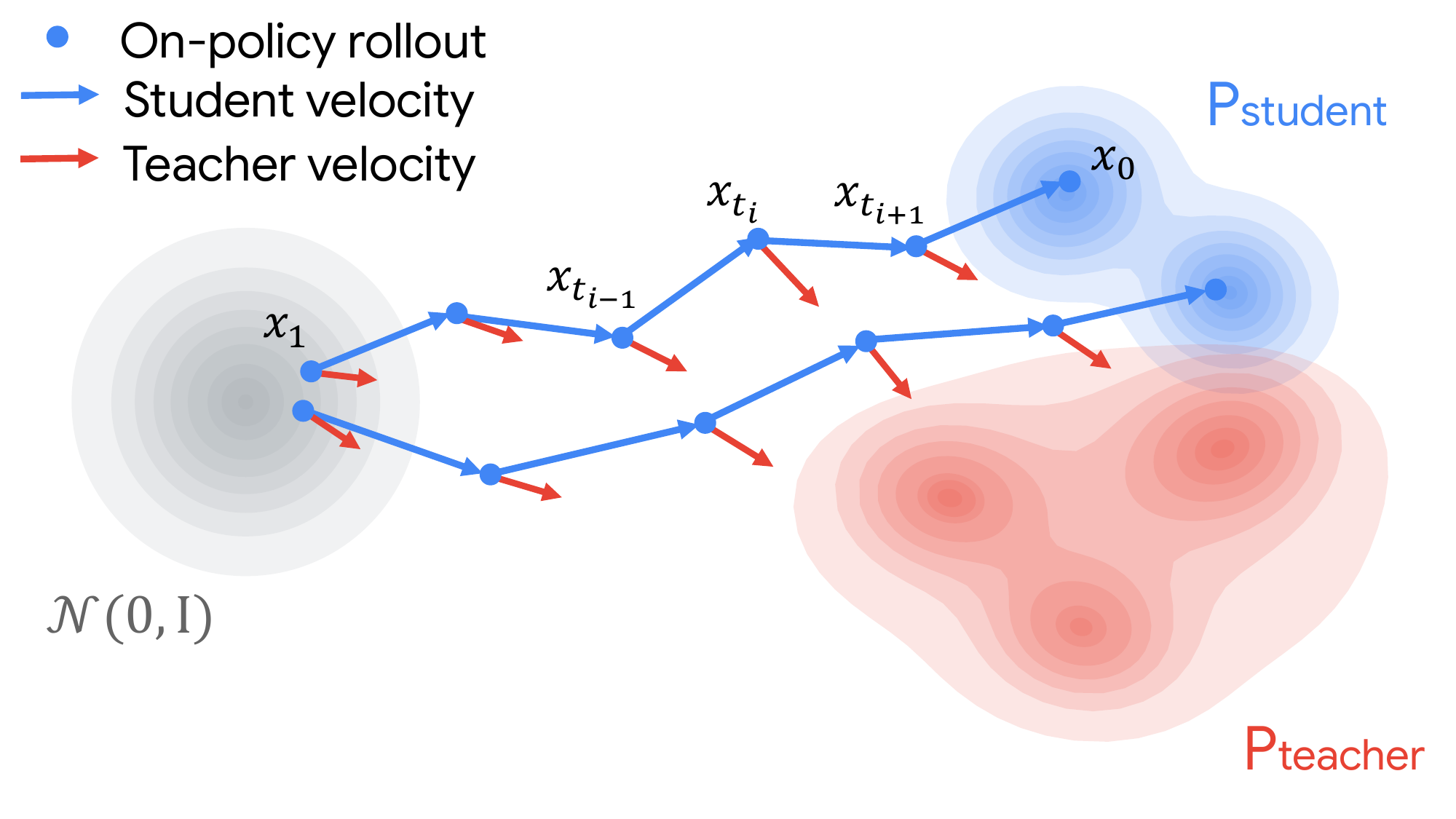}
        \caption{DiffusionOPD}
        \label{fig:concept1}
    \end{subfigure}
    \begin{subfigure}{0.465\linewidth}
        \centering
        \includegraphics[width=\linewidth]{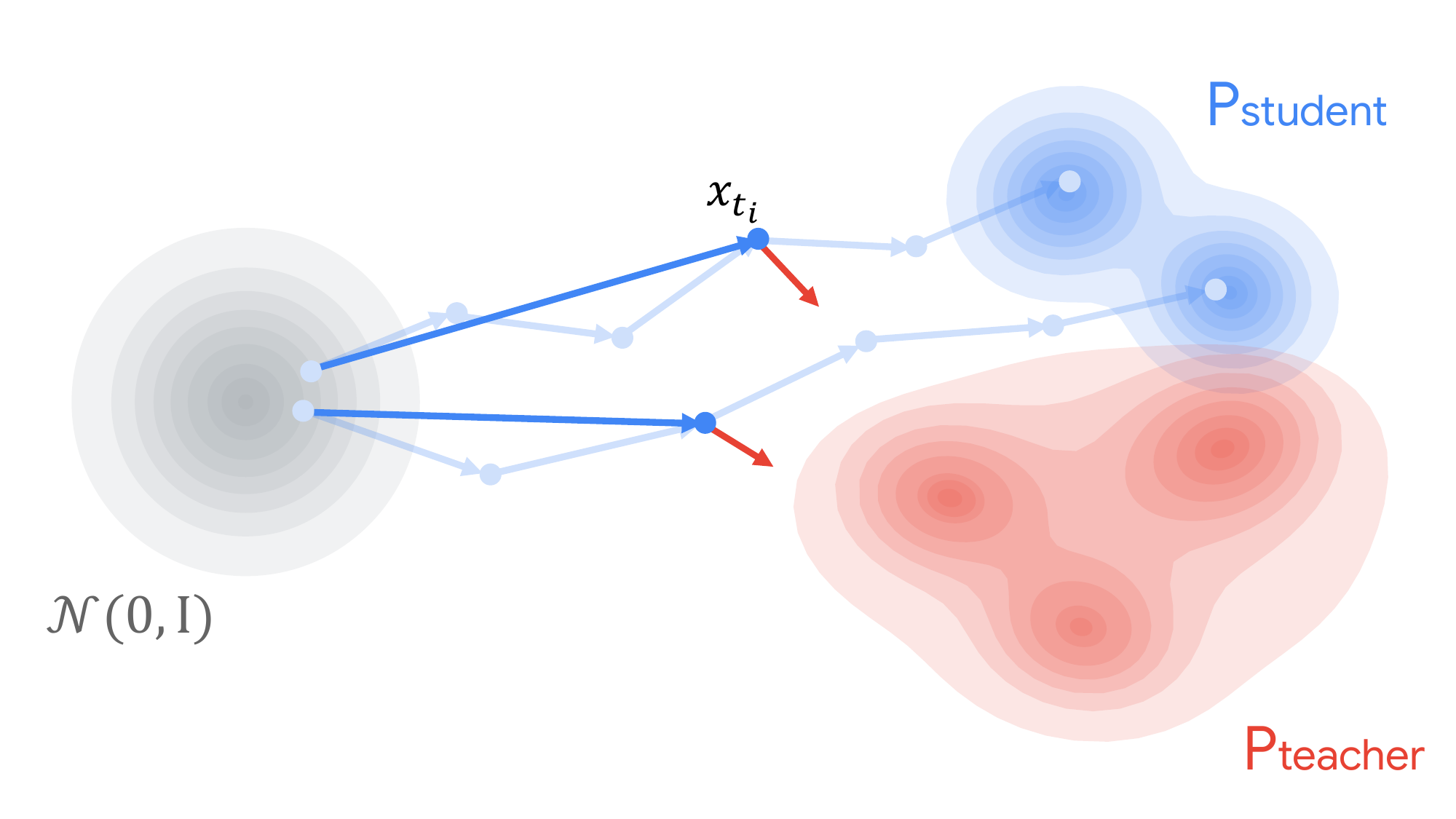}
        \caption{FastOPD (Ours)}
        \label{fig:concept2}
    \end{subfigure}
    \caption{\textbf{Comparison between DiffusionOPD and FastOPD.}
(a) DiffusionOPD evaluates  teacher at every denoising step.
(b) FastOPD evaluates  teacher at a single sampled state reached by a flow-map jump from noise.
Blue arrows denote student transitions; red arrows denote teacher velocities.}
    \label{fig:conceptual-figure}
    \vspace{-7mm}
\end{figure}

Importantly, the same self-consistency formulation naturally induces an
any-step flow map~\citep{boffi2024flow,song2023consistency,kim2024consistency,geng2025mean,sabour2025align},
allowing the distilled student to operate with substantially fewer
sampling steps at inference.
FastOPD therefore compresses the foundation VLA along two complementary
dimensions: model capacity and sampling complexity, enabling deployment
on resource-constrained, consumer-grade hardware.
The proposed formulation also enables stable training in the challenging
few-step regime without adversarial objectives or costly higher-order
derivative computations.
A concurrent work, VLA-OPD~\citep{zhong2026vla}, also studies on-policy
distillation for VLA models, but primarily uses expert supervision to
improve policy performance.
In contrast, FastOPD focuses on transferring a foundation policy into a
lightweight few-step student.

We evaluate FastOPD on two simulation benchmarks and one real-world
manipulation task, using diverse foundation teachers with 3B--6B
parameters~\citep{intelligence2025pi_,yuan2026fast,fang2026molmoact2actionreasoningmodels}.
To maximize both architectural and inference efficiency, we adopt
SmolVLA as the lightweight student policy.
We compare against representative few-step distillation methods,
including CTM, DMD, and iMF.
Across these settings, FastOPD enables the compact student to recover up
to 84\% of the performance of teacher models that are more than
$7.3\times$ larger, using only one or two sampling steps.
This substantially reduces inference cost while retaining a large
fraction of the foundation policy's performance.
To the best of our knowledge, FastOPD is the first OPD framework for
VLAs that jointly reduces both model size and the number of sampling
steps while retaining the capabilities of a large-scale foundation
policy.

\section{Background} 
\textbf{Flow-based Models.} Diffusion and flow-based models~\citep{song2021scorebased,ho2020denoising,esser2024scaling,lipman2023flow} generate high-quality samples through iterative denoising. %
Let $p_1=\mathcal N(\mathbf 0,\mathbf I)$ be a Gaussian prior and $p_0$ the context-conditioned data distribution over actions. 
Flow matching~\citep{lipman2023flow, esser2024scaling} defines a stochastic interpolant $\vx_t=(1-t)\vx_0+t\vx_1$, with $\vx_0\sim p_0$, $\vx_1\sim p_1,\ t\in[0,1]$, and learns the marginal velocity field $\vv(\vx,t):=\mathbb E[\vx_1-\vx_0\mid \vx_t=\vx]$ via%
\begin{equation}
    \mathbb E_{t,\vx_0,\vx_1}\big[\|\vv_\theta(\vx_t,t) - (\vx_1-\vx_0)\|_2^2\big].
\end{equation}
Sampling integrates the probability-flow ODE $\dot{\vx}_t=\vv(\vx_t,t)$ from $t=1$ to $t=0$ , for example, with Euler steps, $\vx_{t_i} = \vx_{t_{i-1}} + \vv(\vx_{t_{i-1}},t_{i-1})\,\Delta t$ where $\Delta t := t_i-t_{i-1}$. 
While these models capture high-quality multimodal action distributions, accurate integration typically requires many denoising steps, motivating few-step distillation for faster inference.

\textbf{Vision-Language-Action Models.}
Vision-Language-Action (VLA) models~\citep{o2024open,black2024pi_0,intelligence2025pi_, bjorck2025gr00t,wu2026pragmatic,liu2024rdt,team2024octo} couple a pretrained vision-language backbone with an action decoding head to map an observation %
and a language instruction to a distribution over robot actions, most commonly via a flow or diffusion action expert~\citep{chi2023diffusionpolicy,black2024pi_0} trained with flow matching loss. Recent foundation VLA models illustrate the scale required for strong multi-task generalization, typically exceeding 2B parameters and trained on over 10{,}000 hours of multimodal robot data spanning multiple embodiments~\citep{black2024pi_0,intelligence2025pi_,wu2026pragmatic}, but remain expensive to deploy. 
Several works~\citep{team2024octo,shukor2025smolvla,wen2024tinyvla,reuss2025flower} instead train smaller VLAs; SmolVLA~\citep{shukor2025smolvla}, for example, reduces model size to 450 million parameters by using a small VLM and layer skipping. 
However, it still requires $N{=}10$ denoising steps per action chunk and does not benefit from the capabilities of larger foundation policies.

\textbf{Flow Map Models.}
A wide range of few-step generative methods aim to reduce iterative sampling cost.
Distillation-based approaches~\citep{sabour2025align, boffi2024flow}, including Consistency Models (CM)~\citep{song2023consistency}, and Consistency Trajectory Models (CTM)~\citep{kim2024consistency}, distill a pre-trained multi-step teacher into a few-step student, while score-distillation methods directly match teacher and student distributions~\citep{yin2024one,yin2024improved}.
Distillation-free approaches, such as MeanFlow (MF)~\citep{geng2025mean} and Shortcut Models~\citep{frans2025one}, instead train from scratch using self-consistency constraints. %
A flow map model $\vf_\theta(\vx,s,t)$ parameterizes the state-to-state transition over an arbitrary interval $[s,t]\subset[0,1]$, so that a single jump $\vf_\theta(\vx_s,s,t)=\vx_s+(t-s)\vu_\theta(\vx_s,s,t)$ at the interpolant $\vx_s=(1-s)\vx_0+s\vx_1$ replaces an entire sequence of Euler substeps. For example, \cite{frans2025one} combines flow matching loss with the self-consistency constraint at the midpoint time $m = ({s+t})/{2}$: 
\begin{equation}
    \mathcal{L}(\theta)=
    \mathbb{E}_{s, t, \vx_1, \vx_0}
    \left[ 
    || \vv_\theta(\vx_s,s)-(\vx_1 - \vx_0)||_2^2+ ||\vf_\theta(\vx_s,s,t)- \vf_{\theta^-}(\vf_{\theta^-}(\vx_s,s,m),m,t)||_2^2
    \right] 
    \label{eq:self-consistency}
\end{equation}
The first term enforces standard flow matching, while the second encourages the direct transition from $s$ to $t$ to agree with two shorter transitions composed through $m$.
In robot manipulation, these ideas have been adapted to accelerate diffusion- and flow-based policies such as Diffusion Policy~\citep{chi2023diffusionpolicy}. Consistency Policy (CP)~\citep{prasad2024consistency} adapts CTM~\citep{kim2024consistency}; OneDP~\citep{wang2024one} and SDM Policy~\citep{jia2024score} use distribution matching for one-step action generation; and recent methods learn average velocity fields~\citep{geng2025mean,geng2026improved,sheng2026mp1,guo2026reactvla} or shortcut objectives~\citep{frans2025one,luan2026snapflow,yan2025maniflow,li2026one} regularized by flow matching. 
For instance, SnapFlow~\citep{luan2026snapflow} constructs its target from two Euler steps but trains only on fixed endpoints, while OFP~\citep{li2026one} derives its target from nested sub-intervals with a self-guided regularizer.
Despite these advances, few-step policies often suffer from training instability and degraded quality as the number of denoising steps decreases. This trade-off is particularly critical in robot manipulation, where low latency must be achieved without sacrificing the precision and stability required for physical execution, and still remains insufficiently addressed~\citep{shukor2025smolvla}.

\textbf{On-policy Distillation.}
Given a pretrained teacher diffusion model, on-policy distillation (OPD) supervises the student to match the teacher's one-step transition kernel $p_T(\cdot\mid \vx_{t_i})$ along the student's own rollout $\vx_{0:N}\sim p_{S,\theta}$, by minimizing the reverse KL objective~\citep{li2026diffusionopd}. Since the student and teacher share the same diffusion coefficient, resulting in identical covariance, its per-step KL reduces to a closed-form squared distance between the means, $\mathrm{KL}(p_S\|p_T) \propto \|\mu_S(\vx_{t_i};\theta)-\mu_T(\vx_{t_i})\|_2^2$. Despite providing dense supervision, this objective necessitates evaluating the teacher at every single denoising step, resulting in computational inefficiency. FastOPD circumvents this training bottleneck by supervising the student at a single on-policy state per trajectory. 
Formulated within a few-step flow map regime, the distilled policy is naturally equipped for real-time robot control.

\section{FastOPD: Fast On-Policy Distillation}
\label{sec:method}
\begin{figure}[t] 
\centering 
\includegraphics[width=\linewidth]{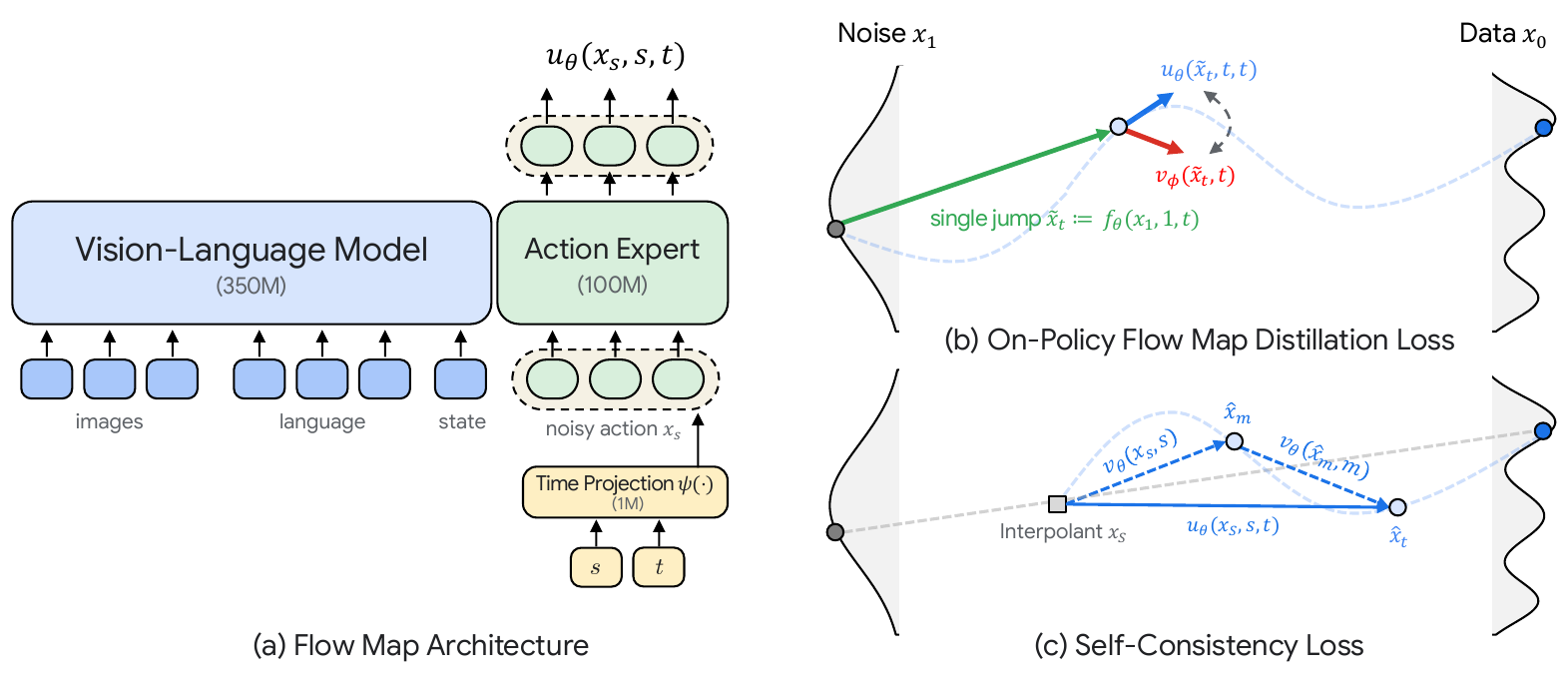} 
\vspace{-8mm}
\caption{
\textbf{Overview of FastOPD.}
(a) {Flow Map Architecture}: a frozen vision-language backbone with a lightweight action expert. A time projection layer $\psi(\cdot)$ fuses $(s,t)$ to condition the action expert on the two-time velocity $\vu_\theta(\vx_s,s,t)$.
(b) {On-Policy Flow Map Distillation Loss}: at a single on-policy jump $\tilde{\vx}_t=\vf_\theta(\vx_1,1,t)$, the student's diagonal velocity is matched to the teacher's. 
(c) {Self-Consistency Loss}: a direct transition $\vu_\theta(\vx_s,s,t)$ is trained to agree with two steps through the midpoint $m$.
}
\label{fig:overview}
\vspace{-5mm}
\end{figure}
\textbf{Overview.}
Our goal is to efficiently distill a large pretrained teacher $\vv_\phi$ into a compact flow map student $\vu_\theta$, while achieving performance on par with the teacher. Specifically, we parameterize the flow map as $\vf_\theta(\vx,s,t)=\vx+(t-s)\vu_\theta(\vx,s,t)$, which satisfies the boundary condition $\vf_\theta(\vx,s,s)=\vx$. We denote its diagonal by $\vv_\theta(\vx,t):=\vu_\theta(\vx,t,t)$, which corresponds to the instantaneous velocity. As illustrated in Fig.~\ref{fig:overview}, we embed $s$ and $t$ independently via sinusoidal positional encoding and fuse them through an additional projection layer $\psi(\cdot)$. Here, we omit the condition $c$ for brevity.

The key idea of FastOPD is to match the teacher only at sampled on-policy states and propagate this local velocity supervision to finite-step flow maps through self-consistency.
Accordingly, FastOPD jointly optimizes on-policy velocity matching and finite-interval self-consistency:
\begin{align}
    \mathcal{L}(\theta) = \underbrace{\mathbb{E}\Big[ \big\|\vu_\theta(\vx_s, s, t) - \tfrac{1}{2}\big(\mathrm{sg}[\vv_{\theta}(\vx_s,s)] + \mathrm{sg}[\vv_{\theta}(\hat{\vx}_m, m)]\big) \big\|^2 \Big]}_{\text{finite-interval self-consistency}} + \lambda \underbrace{\mathbb{E}\Big[\big\| \vv_\theta(\tilde{\vx}_t, t) - \vv_\phi(\tilde{\vx}_t, t) \big\|^2 \Big]}_{\text{on-policy flow map distillation}}.
\label{eq:fastopd}
\end{align}
Here, $\mathrm{sg}[\cdot]$ denotes the stop-gradient operator, and $\tilde{\vx}_t$ denotes a sampled on-policy state, while $\hat{\vx}_m$ denotes the midpoint state induced by the student. We next describe how the two terms implement local teacher matching and its propagation to finite-step flow maps.

\textbf{On-policy Flow Map Distillation.}
We first consider how to efficiently transfer the teacher dynamics to the student. Conventional flow-based OPD evaluates the teacher at every denoising step along the student trajectory, resulting in substantial computational overhead. %
Instead, we supervise the student at a single sampled on-policy state per trajectory{ (see Fig.~\ref{fig:overview}(b))}. 
Specifically, we obtain the state with a single flow-map transition from noise, $\tilde{\vx}_t = \mathrm{sg}\big[\vf_\theta(\vx_1, 1, t)\big]$, which induces the on-policy marginal $\tilde{\vx}_t\sim p_{\theta}$. We then match the teacher and student transitions only at this sampled timestep $t$.
Let $p_{\theta}(\cdot\mid\tilde{\vx}_t)$ and $p_\phi(\cdot\mid\tilde{\vx}_t)$ denote the corresponding one-step transition kernels of the student and teacher, respectively. We formulate the \textit{single-state transition matching} objective as
\begin{equation}
\mathbb{E}_{t\sim p(t)} \mathbb{E}_{\tilde{\vx}_t \sim p_{\theta}} \left[ \text{KL} \left( p_\theta(\cdot\mid\tilde{\vx}_t) \| p_\phi(\cdot\mid\tilde{\vx}_t) \right) \right].
\label{eq:opfd-kl}
\end{equation}
For a flow map, the instantaneous velocity on the diagonal coincides with the underlying velocity field under the tangent condition~\citep{boffi2025build} \footnote{$\lim_{s\to t}\partial_t \vf_\theta(\vx,s,t) = \vu_\theta(\vx,t,t) = \vv_\theta(\vx,t)$}. Using this identity, the reverse KL in \eqref{eq:opfd-kl} can be expressed directly in terms of the student and teacher velocity fields, as shown in the following proposition.
\begin{restatable}[On-policy Flow Map Distillation]{prop}{opfdprop}
\label{prop:prop1}
Let $\sigma_t>0$ be a diffusion coefficient.
For any $t\in(0,1)$ and step $\Delta t<0$, the
Euler--Maruyama kernels induced by
$\vu_\theta(\cdot,t,t)$ and $\vv_\phi(\cdot,t)$ satisfy
\begin{equation}
\KL\left(
p_\theta(\cdot\mid\tilde{\vx}_t)
\,\middle\|\,
p_\phi(\cdot\mid\tilde{\vx}_t)
\right)
=
w_\sigma(t,\Delta t)
\left\|
\vu_\theta(\tilde{\vx}_t,t,t)
-
\vv_\phi(\tilde{\vx}_t,t)
\right\|_2^2,
\end{equation}
where $w_\sigma(t,\Delta t)
=
\frac{|\Delta t|}{2\sigma_t^2}
\left(
1+\frac{\sigma_t^2(1-t)}{2t}
\right)^2$.
\end{restatable}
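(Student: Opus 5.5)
We view each velocity field as the drift of a reverse-time SDE with diffusion coefficient $\sigma_t$, discretize with one Euler--Maruyama step, and compare the two resulting Gaussians. For the rectified-flow interpolant $\vx_t=(1-t)\vx_0+t\vx_1$ we have $\alpha_t=1-t$ and $\beta_t=t$. The score is therefore expressible through the velocity via Tweedie's formula:
\begin{equation}
\nabla\log p_t(\vx)=-\frac{\vx+(1-t)\vv(\vx,t)}{t}.
\end{equation}
The reverse-time SDE that shares the marginals of the probability-flow ODE $\dot\vx_t=\vv(\vx_t,t)$ has drift
\begin{equation}
\vv(\vx,t)-\tfrac{\sigma_t^2}{2}\nabla\log p_t(\vx)
=\Big(1+\frac{\sigma_t^2(1-t)}{2t}\Big)\vv(\vx,t)+\frac{\sigma_t^2}{2t}\vx .
\end{equation}
The sign convention should be checked against the direction of integration, with $\Delta t<0$ so that time runs from $1$ toward $0$. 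I will therefore define the drift $b(\vx,t)=a_t\,\vv(\vx,t)+c_t\,\vx$ with $a_t=1+\sigma_t^2(1-t)/(2t)$. The coefficient $c_t$ depends only on $t$ and $\sigma_t$, not on the network.

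The Euler--Maruyama kernel from $\tilde\vx_t$ is then Gaussian:
\begin{equation}
p(\cdot\mid\tilde\vx_t)=\mathcal N\big(\tilde\vx_t+b(\tilde\vx_t,t)\Delta t,\;\sigma_t^2|\Delta t|\,\mathbf I\big).
\end{equation}
For the student we substitute $\vv=\vu_\theta(\cdot,t,t)$, which is justified by the tangent identity quoted in the footnote. For the teacher we substitute $\vv=\vv_\phi$. Both kernels share the covariance $\sigma_t^2|\Delta t|\mathbf I$, so the Gaussian KL formula reduces to a mean difference:
\begin{equation}
\KL=\frac{\|\mu_\theta-\mu_\phi\|^2}{2\sigma_t^2|\Delta t|}.
\end{equation}
In $\mu_\theta-\mu_\phi$ the terms $\tilde\vx_t$ and $c_t\tilde\vx_t\Delta t$ cancel, leaving
\begin{equation}
\mu_\theta-\mu_\phi=a_t\,\Delta t\,\big(\vu_\theta(\tilde\vx_t,t,t)-\vv_\phi(\tilde\vx_t,t)\big).
\end{equation}
Squaring gives $a_t^2\Delta t^2/(2\sigma_t^2|\Delta t|)=\frac{|\Delta t|}{2\sigma_t^2}a_t^2$, which is exactly $w_\sigma(t,\Delta t)$.

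The main obstacle is pinning down the velocity-to-score conversion and the SDE sign convention so that the prefactor comes out exactly as $1+\sigma_t^2(1-t)/(2t)$ with a plus sign. I would first carefully derive $\vv=\dot\alpha_t\mathbb E[\vx_0\mid\vx]+\dot\beta_t\mathbb E[\vx_1\mid\vx]$ and $\mathbb E[\vx_1\mid\vx]=-t\nabla\log p_t$. Eliminating $\mathbb E[\vx_0\mid\vx]$ then gives $\nabla\log p_t=-(\vx+(1-t)\vv)/t$.

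Next I would fix the reverse-direction SDE as $d\vx=[\vv-\tfrac{\sigma_t^2}{2}\nabla\log p_t]dt+\sigma_t\,d\bar w$ with $dt<0$. The ODE limit $\sigma_t\to0$ recovers $\dot\vx=\vv$, which is a sanity check. If the paper's convention yields the opposite sign, I would adjust which SDE (forward or reverse) defines $p_\theta$, since the statement's weight pins it down. Everything else is the routine equal-covariance Gaussian KL computation.
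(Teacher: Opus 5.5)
Your proposal is correct and follows essentially the same route as the paper. Both arguments express the score through the velocity as $\nabla\log p_t=-(\vx+(1-t)\vv)/t$ and substitute it into the marginal-preserving reverse-time SDE, giving drift $a_t\vv+\frac{\sigma_t^2}{2t}\vx$ with $a_t=1+\frac{\sigma_t^2(1-t)}{2t}$. You then take one Euler--Maruyama step and use the equal-covariance Gaussian KL, so the $\tilde\vx_t$-dependent terms cancel and exactly $w_\sigma(t,\Delta t)$ remains. Your sign convention (reverse SDE with $-\frac{\sigma_t^2}{2}\nabla\log p_t$ and $\Delta t<0$) is the one the paper uses, and it is the choice that produces the plus sign in the prefactor, so no adjustment is needed.
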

Proposition~\ref{prop:prop1} reduces the single-state transition matching objective to a weighted velocity regression. We therefore define the practical on-policy flow map distillation (OPFD) objective as
\begin{equation}
    \mathcal{L}_{\text{OPFD}}(\theta)=\mathbb{E}_{c,t \sim p(t)}\left[ \mathbb{E}_{\tilde{\vx}_t \sim p_{\theta}} \left[ \|\vv_\theta(\tilde{\vx}_t,t)-\vv_\phi(\tilde{\vx}_t,t) \|^2_2\right]\right].
 \label{eq:opfd}
\end{equation}
\Eqref{eq:opfd} can be viewed as the pointwise version of the original distribution matching in \eqref{eq:opfd-kl}, since it is characterized by a deterministic flow ODE. We set the weighting function $w_\sigma(t,\Delta t)$ to 1 in practice, following \cite{li2026diffusionopd}. Full derivation is provided in Appendix~\ref{app:proof}.

\textbf{Self-consistency Property.} 
OPFD transfers the teacher velocity to the student's 
diagonal flow map  $\vu_\theta(\vx,t,t)$, but our goal is to learn a flow map over arbitrary finite interval $(s,t)$ at the same time.
We therefore propagate this local teacher supervision to finite-step transitions through self-consistency (see Fig.~\ref{fig:overview}(c)).
Following Shortcut Models~\citep{frans2025one}, a direct transition from $s$ to $t$ should agree with two consecutive shorter transitions through the midpoint $m=(s+t)/2$.
Unlike the original shortcut formulation, we construct the target using the \emph{instantaneous} velocities at the diagonals of the two sub-intervals, $\vv_\theta(\vx_s,s)$ and $v_\theta(\hat{\vx}_m, m)$, as follows:
\begin{equation}
    \mathcal{L}_{\text{SC}}(\theta)=\mathbb{E}_{c, s,t,\vx} \left[ \left\|\vu_\theta(\vx_s,s,t)-\frac{1}{2}\left[\mathrm{sg}\big[\vv_{\theta}(\vx_s,s)\big]+\mathrm{sg}\big[\vv_{\theta}(\hat{\vx}_{m},m)\big]\right]\right\|^2 \right]
\end{equation}
where $\hat{\vx}_m := \vx_s + (m-s)\vv_\theta(\vx_s,s)$.  

The resulting target is anchored to teacher-supervised quantities at every interval. 
Note that the diagonal terms coincide with the quantities directly supervised by OPFD. 
Accordingly, we define the training objective as
\begin{equation}
    \mathcal{L}_{\text{FastOPD}} = \mathcal{L}_{\text{SC}} + \lambda\mathcal{L}_{\text{OPFD}}.
\end{equation}
In the following theorem, we demonstrate that the student implicitly recovers the teacher-induced flow map.
Let $\vv_\phi(\vx,t)$ be the teacher velocity field, and we define the teacher-induced ideal shortcut as
\begin{equation}
{\vu_\phi}(\vx,s,t) = \tfrac{1}{2}\big[\vv_\phi(\vx,s) + \vv_\phi({\bar{\vx}}_m, m)\big], \quad
\bar{\vx}_m = \vx + (m-s)\,\vv_\phi(\vx,s),
\end{equation}
with ${\vu}_\phi(\vx,t,t)=\vv_\phi(\vx,t)$. We then define the corresponding ideal flow map as $\vf_\phi(\vx,s,t)=\vx+(t-s)\vu_\phi(\vx,s,t)$.

\begin{restatable}[]{prop}{boundprop}
\label{prop:prop2}
    Let $q_\theta$ and $\rho_\theta$ be the sampling distributions of $\mathcal{L}_{\text{SC}}$ and $\mathcal{L}_{\text{OPFD}}$. Assume that $\vv_\phi(\cdot,t)$ is $L$-Lipschitz and that $(s,\vx_s)$ and $(m,\hat{\vx}_m)$ under $q_\theta$
have density ratio at most $\kappa$ with respect to $\rho_\theta$. Then,
    \begin{equation}
        D_{q_\theta}(\vu_\theta, \vu_\phi) \leq C_\lambda\mathcal{L}_{\text{FastOPD}},
    \end{equation}
    where $D_{q_\theta}(\vu_\theta, \vu_\phi):=\mathbb{E}_{(s,t,\vx)\sim q_\theta} \|\vu_\theta(\vx, s, t) - \vu_\phi(\vx, s, t) \|_2^2$ and $C_\lambda = \max\Big\{4,\ \frac{\kappa(2+L^2/4)}{\lambda}\Big\}$.
\end{restatable}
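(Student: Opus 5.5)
The plan is a triangle inequality that inserts the student's own shortcut target between $\vu_\theta$ and $\vu_\phi$. Write the target as
\[
\bar{\vu}_\theta(\vx,s,t):=\tfrac12\big[\vv_\theta(\vx,s)+\vv_\theta(\hat{\vx}_m,m)\big].
\]
Then
\[
\|\vu_\theta-\vu_\phi\|^2\le 2\|\vu_\theta-\bar{\vu}_\theta\|^2+2\|\bar{\vu}_\theta-\vu_\phi\|^2 .
\]
Taking expectations under $q_\theta$, the first term is exactly $2\mathcal{L}_{\text{SC}}$. The stop-gradient changes only the gradients, not the value of the loss, so this identification is exact.

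For the second term, I would expand the difference of the two averaged velocities:
\[
\bar{\vu}_\theta-\vu_\phi=\tfrac12\big[\vv_\theta(\vx_s,s)-\vv_\phi(\vx_s,s)\big]+\tfrac12\big[\vv_\theta(\hat{\vx}_m,m)-\vv_\phi(\bar{\vx}_m,m)\big].
\]
In the second bracket I would add and subtract $\vv_\phi(\hat{\vx}_m,m)$. This gives a pure velocity mismatch at $(m,\hat{\vx}_m)$ plus the term $\vv_\phi(\hat{\vx}_m,m)-\vv_\phi(\bar{\vx}_m,m)$. By the $L$-Lipschitz assumption, that extra term is bounded by
\[
L\|\hat{\vx}_m-\bar{\vx}_m\|=L|m-s|\,\|\vv_\theta(\vx_s,s)-\vv_\phi(\vx_s,s)\|\le \tfrac{L}{2}\|\vv_\theta(\vx_s,s)-\vv_\phi(\vx_s,s)\|,
\]
using $|m-s|=|t-s|/2\le 1/2$. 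Collecting the pieces gives
\[
\|\bar{\vu}_\theta-\vu_\phi\|\le \tfrac12(1+\tfrac L2)\,a+\tfrac12\, b,
\]
where $a$ is the velocity error at $(s,\vx_s)$ and $b$ is the velocity error at $(m,\hat{\vx}_m)$. Squaring with a Cauchy--Schwarz split yields a bound of the form $\alpha a^2+\beta b^2$. The target is a combination that produces $2+L^2/4$ after the outer factor $2$. One natural route is to split the three terms $a/2$, $La/4$, $b/2$ with weights chosen so the sum of coefficients matches the stated constant. The exact bookkeeping is the step I would adjust until it lands on $\kappa(2+L^2/4)$; if the constant comes out slightly different, the structure of the proof is unchanged.

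Next, I would transfer the two mismatch expectations from $q_\theta$ to $\rho_\theta$ using the density-ratio assumption. This gives
\[
\mathbb{E}_{q_\theta}[a^2]\le\kappa\,\mathcal{L}_{\text{OPFD}}\quad\text{and}\quad\mathbb{E}_{q_\theta}[b^2]\le\kappa\,\mathcal{L}_{\text{OPFD}},
\]
so the second term is at most $\kappa(2+L^2/4)\mathcal{L}_{\text{OPFD}}=\frac{\kappa(2+L^2/4)}{\lambda}\,\lambda\mathcal{L}_{\text{OPFD}}$. Adding the two terms gives
\[
D_{q_\theta}\le c_1\mathcal{L}_{\text{SC}}+c_2\lambda\mathcal{L}_{\text{OPFD}}\le\max\{c_1,c_2\}\,\mathcal{L}_{\text{FastOPD}},
\]
with $c_1$ read as $4$. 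Getting exactly $4$ may require a weighted Young inequality rather than the plain factor $2$. The last step is routine.

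The main obstacle I expect is the change of measure. The point $\hat{\vx}_m$ is generated by the student's Euler step, while $\rho_\theta$ is the distribution of on-policy jump states $\vf_\theta(\vx_1,1,t)$. The density-ratio hypothesis is precisely what bridges this gap, so the argument needs care to apply it to the joint law of $(m,\hat{\vx}_m)$, not only to its marginal in $\vx$.
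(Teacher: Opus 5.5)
Your argument follows the paper's proof: insert the self-consistency target $T_\theta=\tfrac12[\vv_\theta(\vx,s)+\vv_\theta(\hat{\vx}_m,m)]$, add and subtract $\vv_\phi(\hat{\vx}_m,m)$, bound the midpoint mismatch by $L|m-s|\,a\le\tfrac L2 a$, and transfer $\mathbb{E}_{q_\theta}[a^2]$ and $\mathbb{E}_{q_\theta}[b^2]$ to $\rho_\theta$ via the density ratio. Your worry about the change of measure is already handled by the hypothesis, which is stated for the joint laws of $(s,\vx_s)$ and $(m,\hat{\vx}_m)$ and is applied to $g(t,\vx)=\|\vv_\theta(\vx,t)-\vv_\phi(\vx,t)\|_2^2$.

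The step that fails is the constant. Since the proposition is a statement about $C_\lambda$, this cannot be left as ``adjust until it lands.'' With the outer split $\|x+y\|^2\le 2\|x\|^2+2\|y\|^2$, no choice of weights gives $2+L^2/4$. Any weighted Cauchy--Schwarz split of $(\tfrac12 a+\tfrac L4 a+\tfrac12 b)^2$ yields coefficients on $a^2$ and $b^2$ summing to at least $(\tfrac12+\tfrac L4+\tfrac12)^2=(1+\tfrac L4)^2$. The same holds if you bound the cross term $\mathbb{E}[ab]$ by $\kappa\mathcal{L}_{\text{OPFD}}$. So after the factor $2$ your OPFD coefficient is at least $\kappa(2+L+\tfrac{L^2}{8})$, which is strictly larger than $\kappa(2+\tfrac{L^2}{4})$ for $0<L<8$. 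The resulting constant $\max\{2,\kappa(2+L+L^2/8)/\lambda\}$ exceeds the stated $C_\lambda$ whenever the second entry dominates, i.e.\ for small $\lambda$. Your claim that the factor-$2$ route can be tuned to $2+L^2/4$ is therefore wrong.

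The fix is the weighted Young inequality you anticipated, with weights $\|x+y\|^2\le 4\|x\|^2+\tfrac43\|y\|^2$. Then split $y$ uniformly into three summands, keeping the Lipschitz piece $\tfrac12[\vv_\phi(\hat{\vx}_m,m)-\vv_\phi(\bar{\vx}_m,m)]$ as its own term rather than merging it into $\tfrac12(1+\tfrac L2)a$. Since $\tfrac43\cdot 3=4$, this gives $\|\vu_\theta-\vu_\phi\|_2^2\le 4\|\vu_\theta-T_\theta\|_2^2+(1+\tfrac{L^2}{4})a^2+b^2$. After the change of measure, $D_{q_\theta}\le 4\mathcal{L}_{\text{SC}}+\kappa(2+\tfrac{L^2}{4})\mathcal{L}_{\text{OPFD}}\le C_\lambda\mathcal{L}_{\text{FastOPD}}$. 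This is exactly the paper's proof, which applies $\|\sum_{i=1}^4 x_i\|^2\le 4\sum_{i=1}^4\|x_i\|^2$ once to the four-term decomposition. Your merged bound also works with the $(4,\tfrac43)$ outer split, since $\tfrac43(1+\tfrac L4)^2\le 2+\tfrac{L^2}{4}$ with gap $(L-2)^2/6$.
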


Thus, OPFD transfers the teacher's instantaneous velocity to the student, while self-consistency propagates this agreement from the diagonal velocity to finite-step shortcut maps.

\begin{restatable}[]{theorem}{main}
\label{theorem:main}
Let $p_\theta$ and $p_{\phi}$ denote the terminal distributions induced by sampling with $\vf_\theta$ and $\vf_{\phi}$, respectively, from the same initial noise on a fixed sampling schedule $1=t_0>t_1>\cdots>t_K=0$, and let $\vx^\theta_{t_k}$ denote the intermediate states of the student. Suppose that the assumptions of Proposition~\ref{prop:prop2} hold and there exists finite $\kappa_{\mathrm{roll}}>0$ such that, for every $k$,
\begin{equation}
D_k:=\mathbb{E}\big\|\vu_\theta(\vx^\theta_{t_k},t_k,t_{k+1})-\vu_\phi(\vx^\theta_{t_k},t_k,t_{k+1})\big\|_2^2
    \le \kappa_{\mathrm{roll}}\, D_{q_\theta}(\vu_\theta,\vu_\phi).
\end{equation}
Then,
\begin{equation}
    W_2(p_\theta, p_{\phi}) \leq C \sqrt{\mathcal{L}_{\text{FastOPD}}},
\end{equation}
where $C>0$ depends only on $L$, $\kappa$, $\kappa_{\mathrm{roll}}$, $\lambda$, and the sampling schedule.
\end{restatable}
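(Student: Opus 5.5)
We couple the two samplers through the same initial noise and bound the terminal $W_2$ distance by the root-mean-square gap between the coupled terminal states. Concretely, let $\vx^\theta_{t_0}=\vx^\phi_{t_0}=\vx_1\sim p_1$. The two rollouts are
\[
\vx^\theta_{t_{k+1}}=\vf_\theta(\vx^\theta_{t_k},t_k,t_{k+1}),\qquad \vx^\phi_{t_{k+1}}=\vf_\phi(\vx^\phi_{t_k},t_k,t_{k+1}).
\]
The pair $(\vx^\theta_{t_K},\vx^\phi_{t_K})$ is a coupling of $(p_\theta,p_\phi)$, so
\[
W_2(p_\theta,p_\phi)\le e_K:=\big(\mathbb{E}\|\vx^\theta_{t_K}-\vx^\phi_{t_K}\|_2^2\big)^{1/2}.
\]
It therefore suffices to show $e_K\le C\sqrt{\mathcal{L}_{\text{FastOPD}}}$, with $e_0=0$.

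The core step is a one-step recursion. Write $h_k:=t_k-t_{k+1}>0$. Adding and subtracting $\vf_\phi(\vx^\theta_{t_k},t_k,t_{k+1})$ splits the next-step error into a local term and a propagated term:
\[
\vx^\theta_{t_{k+1}}-\vx^\phi_{t_{k+1}}
=\underbrace{(t_{k+1}-t_k)\big[\vu_\theta-\vu_\phi\big](\vx^\theta_{t_k},t_k,t_{k+1})}_{\text{local}}
+\underbrace{\vf_\phi(\vx^\theta_{t_k},t_k,t_{k+1})-\vf_\phi(\vx^\phi_{t_k},t_k,t_{k+1})}_{\text{propagated}}.
\]

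The local term has second moment $h_k^2D_k$. By the rollout hypothesis and Proposition~\ref{prop:prop2},
\[
h_k^2D_k\le h_k^2\,\kappa_{\mathrm{roll}}C_\lambda\,\mathcal{L}_{\text{FastOPD}}.
\]

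For the propagated term we need a Lipschitz constant for the ideal flow map $\vf_\phi(\cdot,s,t)$. Since $\vv_\phi(\cdot,s)$ is $L$-Lipschitz, the midpoint map $\vx\mapsto\bar{\vx}_m$ is $(1+h_k L/2)$-Lipschitz. Hence $\vu_\phi(\cdot,s,t)$ has Lipschitz constant $\tfrac12\big(L+L(1+h_kL/2)\big)=L+h_kL^2/4$. The flow map is then Lipschitz with constant
\[
\Lambda_k:=1+h_kL+\tfrac{h_k^2L^2}{4}=\Big(1+\tfrac{h_kL}{2}\Big)^2\le e^{h_kL}.
\]
This is where the assumption that $L$ is uniform in $t$ is used.

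Minkowski's inequality in $L^2(\mathbb{P})$ then gives
\[
e_{k+1}\le \Lambda_k e_k+h_k\sqrt{\kappa_{\mathrm{roll}}C_\lambda\,\mathcal{L}_{\text{FastOPD}}}.
\]
Unrolling from $e_0=0$ (discrete Grönwall) yields
\[
e_K\le\Big(\sum_{k=0}^{K-1}h_k\prod_{j>k}\Lambda_j\Big)\sqrt{\kappa_{\mathrm{roll}}C_\lambda}\,\sqrt{\mathcal{L}_{\text{FastOPD}}}\le e^{L}\sqrt{\kappa_{\mathrm{roll}}C_\lambda}\,\sqrt{\mathcal{L}_{\text{FastOPD}}},
\]
using $\sum_k h_k=1$ and $\prod_j\Lambda_j\le e^{L\sum_j h_j}\le e^{L}$. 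We may thus take $C=e^{L}\sqrt{\kappa_{\mathrm{roll}}\max\{4,\kappa(2+L^2/4)/\lambda\}}$. This depends only on $L,\kappa,\kappa_{\mathrm{roll}},\lambda$, and in its sharper product form on the schedule, as claimed.

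The main obstacle is the local-error step: Proposition~\ref{prop:prop2} controls $\vu_\theta-\vu_\phi$ only under the training distribution $q_\theta$, whereas the recursion needs it along the student's own rollout states $\vx^\theta_{t_k}$ at the specific schedule pairs $(t_k,t_{k+1})$. The hypothesis involving $\kappa_{\mathrm{roll}}$ is exactly what transfers the bound, so the proof must invoke it at each step rather than trying to derive a change of measure. A second point is that the error must be evaluated at the student state, not the teacher state. This ensures that only the teacher map's Lipschitz constant enters, so no regularity of the student $\vu_\theta$ is needed.
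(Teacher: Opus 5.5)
Your proof is correct and follows the paper's argument: the same shared-noise coupling, the same split into a local error evaluated at the student's rollout state plus a propagated error, and the same Lipschitz constant $(1+h_kL/2)^2$ for the teacher-induced map $\vf_\phi(\cdot,t_k,t_{k+1})$. The only difference is that you close the recursion with Minkowski's inequality on the root-mean-square error, whereas the paper squares the triangle inequality via $(a+b)^2\le 2a^2+2b^2$ and so picks up a factor $2$ per step (its coefficients $a_k$ carry $2^{K-k}$); your version gives the sharper, schedule-independent constant $C=e^{L}\sqrt{\kappa_{\mathrm{roll}}C_\lambda}$.
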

Consequently, $W_2(p_\theta, p_{\phi}) \to 0$ as $\mathcal{L}_{\text{FastOPD}} \to 0$.
Since $\vu_\phi$ is built entirely from the pretrained teacher's velocity field, $p_{\phi}$ is the few-step distribution induced by the teacher-induced flow map on the same sampling schedule. The theorem therefore shows that FastOPD allows the compact student to implicitly recover this distribution without explicitly constructing a shortcut teacher.
The detailed algorithm for FastOPD is presented in Algorithm~\ref{alg:fastopd}.

\begin{algorithm}[t]
\caption{Training Algorithm for FastOPD}
\label{alg:fastopd}
\begin{algorithmic}[1]
\Require Dataset $\mathcal{D}$; Pretrained teacher $\vv_\phi$; student flow map $\vu_\theta$ with time embedding layer $\psi(\cdot)$; OPFD weight $\lambda$; Sampling time distribution $p$

\For{each training iteration}
    \State $\vx_1\sim\mathcal N(\mathbf 0,\mathbf I),\ \tau\sim p,\ (\vx_0,c)\sim \mathcal{D}$
    \State $\tilde{\vx}_{\tau} \leftarrow \mathrm{sg}\big[\vf_{\theta}(\vx_1,1,\tau)\big]$
    \State $\mathcal{L}_{\text{OPFD}} \leftarrow \|\vv_\theta(\tilde{\vx}_{\tau},\tau \mid c) - \vv_\phi(\tilde{\vx}_{\tau},\tau \mid c)\|_2^2$ \Comment{single on-policy state supervision}
    \State $s,t\sim p$ with $s>t$
    \State $m \leftarrow (s+t)/2$, $\vx_s \leftarrow (1-s)\vx_0+s\vx_1$
    \State $\hat{\vx}_m \leftarrow \vx_s + (m-s)\,\mathrm{sg}\big[\vv_\theta(\vx_s,s\mid c)\big]$
    \State $\mathcal{L}_{\text{SC}} \leftarrow \big\|\vu_\theta(\vx_s,s,t \mid c) - \tfrac{1}{2}\big[\mathrm{sg}[\vv_\theta(\vx_s,s\mid c)] + \mathrm{sg}[\vv_\theta(\hat{\vx}_m,m\mid c)]\big]\big\|_2^2$
    \State $\mathcal{L}_{\text{FastOPD}} \leftarrow \mathcal{L}_{\text{SC}} + \lambda\,\mathcal{L}_{\text{OPFD}}$
    \State $\theta \leftarrow \theta - \eta\,\nabla_\theta \mathcal{L}_{\text{FastOPD}}$
\EndFor
\end{algorithmic}

\end{algorithm}

\section{Experiments}
Our experiments span two multitask simulation benchmarks (Sec.~\ref{sec:simulation}) and one real-robot task (Sec.~\ref{sec:real}).
We freeze the vision--language backbone and fine-tune only the action expert and projections with the additional time projection layer described in Sec.~\ref{sec:method}.
We evaluate each method using 1, 2, 4, and 10 denoising steps and compare it against the teacher and the initial student with 50 trials per task.
Full experimental details are provided in Appendix~\ref{app:exp-details}.

\subsection{Simulation Experiments}
\label{sec:simulation}
\textbf{Benchmarks.} 
We evaluate FastOPD on LIBERO~\citep{liu2023libero} and RoboTwin~2.0~\citep{chen2025robotwin}.
LIBERO evaluates single-arm manipulation across four standard evaluation suites---Spatial, Object, Goal, and Long---comprising 40 tasks in total.
RoboTwin~2.0~\citep{chen2025robotwin} comprises 50 challenging bimanual manipulation tasks, with 2,500 demonstrations collected in clean scenes and 25,000 demonstrations collected under heavy scene randomization.
We report results in both clean and randomized settings on unseen environments.

\textbf{Baselines.}
For simulation experiments, we compare our method with three representative approaches to few-step generation: Consistency Trajectory Model (CTM)~\citep{kim2024consistency},
Distribution Matching Distillation (DMD)~\citep{yin2024one,yin2024improved},
and improved Mean Flow (iMF)~\citep{geng2026improved}.
Since iMF was originally formulated for training from scratch, we adapt it to
the distillation setting by replacing its self-referential instantaneous
velocity target with the teacher velocity.
Except for DMD, a one-step generator, all baselines are designed as the flow-map architecture.

\textbf{Teacher and Student Models.}
We use SmolVLA~\citep{shukor2025smolvla} as the student model on both benchmarks,
paired with $\pi_{0.5}$~\citep{intelligence2025pi_} on LIBERO, LingBot-VLA~\citep{wu2026pragmatic} and Fast-WAM~\citep{yuan2026fast} on RoboTwin~2.0.
All models are initialized from public benchmark-specific checkpoints.
\vspace{10pt}

\begin{table}[t]
\centering
\begin{minipage}[t]{0.485\linewidth}
\vspace{0pt}
\caption{
Success rates (\%) on LIBERO across spatial, object, goal,
and long-horizon tasks. $\dagger$: Results reported in the original papers.
}
\label{tab:main_results_libero}
\centering
\scriptsize
\setlength{\tabcolsep}{1.5pt}
\renewcommand{\arraystretch}{1.18} %
\begin{tabular*}{\linewidth}{@{\extracolsep{\fill}}lccccccc@{}}
\toprule
\textbf{Method} & \textbf{Param.} & \textbf{NFE}
& \textbf{Spat.} & \textbf{Obj.} & \textbf{Goal} & \textbf{Long} & \textbf{Avg.} \\
\midrule
\multicolumn{8}{l}{\textbf{Diffusion/Flow-based Policies}} \\
\midrule
DP$^\dagger$        & --   & 100 & 78.3 & 92.5 & 68.3 & 50.5 & 72.4 \\
Octo-Base$^\dagger$ & 93M  & 10  & 78.9 & 85.7 & 84.6 & 51.1 & 75.1 \\
$\pi_{0}$$^\dagger$ & 3.3B & 10  & 97.4 & 98.4 & 97.6 & 93.0 & 96.6 \\
\midrule
\multicolumn{8}{l}{\textbf{Teacher \& Base Student Models}} \\
\midrule
$\pi_{0.5}$ & 3.3B & 10 & 98.2 & 99.4 & 97.4 & 94.8 & 97.5 \\
\multirow[t]{4}{*}{SmolVLA} & \multirow[t]{4}{*}{450M}
& 1  & 74.6 & 73.2 & 77.4 & 51.2 & 69.1 \\
& & 2  & 76.8 & 74.4 & 79.8 & 54.4 & 71.4 \\
& & 4  & 79.0 & 74.8 & 79.2 & 58.0 & \underline{72.8} \\
& & 10 & 77.2 & 75.0 & 79.4 & 54.6 & \underline{71.6} \\
\midrule
\multicolumn{8}{l}{\textbf{Few-step Distillation}} \\
\midrule
DMD & 450M & 1 & 82.4 & 87.2 & 75.6 & 47.2 & \underline{73.1} \\
\multirow[t]{4}{*}{CTM} & \multirow[t]{4}{*}{451M}
& 1  & 78.0 & 86.6 & 70.8 & 41.4 & 69.2 \\
& & 2  & 75.4 & 84.2 & 70.4 & 43.4 & 68.4 \\
& & 4  & 68.4 & 77.0 & 71.2 & 45.0 & 65.4 \\
& & 10 & 72.0 & 81.6 & 72.4 & 44.4 & 67.6 \\
\multirow[t]{4}{*}{iMF} & \multirow[t]{4}{*}{451M}
& 1  & 70.8 & 78.6 & 75.6 & 30.6 & 63.9 \\
& & 2  & 76.0 & 90.6 & 75.6 & 46.4 & \underline{72.2} \\
& & 4  & 68.2 & 84.8 & 67.0 & 38.0 & 64.5 \\
& & 10 & 65.2 & 83.6 & 66.6 & 35.2 & 62.7 \\
\multirow[t]{4}{*}{\textbf{Ours}} & \multirow[t]{4}{*}{451M}
& 1  & 84.2 & 94.2 & 82.6 & 48.2 & \textbf{77.3} \\
& & 2  & 83.6 & 96.8 & 87.2 & 59.6 & \textbf{81.8} \\
& & 4  & 83.0 & 95.0 & 85.6 & 57.4 & \textbf{80.3} \\
& & 10 & 80.4 & 94.0 & 84.4 & 57.2 & \textbf{79.0} \\
\bottomrule
\end{tabular*}
\end{minipage}
\hfill
\begin{minipage}[t]{0.495\linewidth}
\vspace{0pt}
\caption{
Success rates (\%) on RoboTwin~2.0 under unseen clean
and randomized settings.  $\dagger$: Results reported in the original papers.
}
\label{tab:main_results_robotwin2}
\centering
\scriptsize
\setlength{\tabcolsep}{1.5pt}
\renewcommand{\arraystretch}{0.94}
\begin{tabular*}{\linewidth}{@{\extracolsep{\fill}}lccccc@{}}
\toprule
\textbf{Method} & \textbf{Param.} & \textbf{NFE}
& \textbf{Clean} & \textbf{Rand.} & \textbf{Avg.} \\
\midrule
\multicolumn{6}{l}{\textbf{Diffusion/Flow-based Policies}} \\
\midrule
DP$^\dagger$          & 0.46B & 10 & 28.0 & 0.6  & 14.3 \\
RDT$^\dagger$         & 1.2B  & 10 & 34.5 & 13.7 & 24.1 \\
$\pi_0$$^\dagger$     & 3.3B  & 10 & 46.4 & 16.3 & 31.4 \\
$\pi_{0.5}$ & 3.3B  & 10 & 70.7 & 46.0 & 58.4 \\
\midrule
\multicolumn{6}{l}{\textbf{Teacher \& Base Student Models}} \\
\midrule
LingBot-VLA & 4.2B & 10 & 86.8 & 84.9 & 85.8 \\
Fast-WAM    & 6B   & 10 & 91.9 & 91.8 & 91.9 \\
\multirow[t]{4}{*}{SmolVLA} & \multirow[t]{4}{*}{450M}
& 1  & 35.2 & 35.4 & 35.3 \\
& & 2  & 55.0 & 56.1 & 55.6 \\
& & 4  & 59.8 & 61.0 & \textbf{60.4} \\
& & 10 & 59.4 & 61.3 & \textbf{60.4} \\
\midrule
\multicolumn{6}{l}{\textbf{Few-step Distillation}} \\
\midrule
DMD & 450M & 1 & 32.5 & 32.1 & 32.3 \\
\multirow[t]{4}{*}{CTM} & \multirow[t]{4}{*}{451M}
& 1  & 15.5 & 15.6 & 15.6 \\
& & 2  & 4.8  & 15.8 & 10.3 \\
& & 4  & 18.7 & 16.0 & 17.3 \\
& & 10 & 36.1 & 34.5 & 35.3 \\
\multirow[t]{4}{*}{iMF} & \multirow[t]{4}{*}{451M}
& 1  & 43.2 & 42.4 & 42.8 \\
& & 2  & 51.6 & 51.1 & 51.4 \\
& & 4  & 50.8 & 51.0 & 50.9 \\
& & 10 & 50.6 & 51.4 & 51.0 \\
\multirow[t]{4}{*}{\textbf{Ours (LingBot-VLA)}} & \multirow[t]{4}{*}{451M}
& 1  & 51.5 & 50.9 & \textbf{51.2} \\
& & 2  & 57.1 & 58.3 & \textbf{57.7} \\
& & 4  & 58.6 & 61.2 & \underline{59.9} \\
& & 10 & 56.9 & 59.6 & 58.2 \\
\multirow[t]{4}{*}{\textbf{Ours (Fast-WAM)}} & \multirow[t]{4}{*}{451M}
& 1  & 51.1 & 49.8 & \underline{50.5} \\
& & 2  & 55.5 & 57.2 & \underline{56.3} \\
& & 4  & 58.5 & 59.2 & 58.8 \\
& & 10 & 58.3 & 59.6 & \underline{58.9} \\
\bottomrule
\end{tabular*}

\end{minipage}

\vspace{0pt}
\caption{End-to-end inference latency (ms) on LIBERO and RoboTwin~2.0.}
\label{tab:latency}
\centering
\scriptsize
\setlength{\tabcolsep}{3pt}
\renewcommand{\arraystretch}{1.0}
\begin{tabular*}{\linewidth}{@{\extracolsep{\fill}}lcccc@{\hspace{12pt}}lcccc@{}}
\toprule
\multicolumn{5}{c}{\textbf{LIBERO}} & \multicolumn{5}{c}{\textbf{RoboTwin~2.0}} \\
\cmidrule(lr){1-5}\cmidrule(l){6-10}
\textbf{Model} & \textbf{NFE=1} & \textbf{2} & \textbf{4} & \textbf{10}
& \textbf{Model} & \textbf{NFE=1} & \textbf{2} & \textbf{4} & \textbf{10} \\
\midrule
$\pi_{0.5}$ (teacher) & 155 & 171 & 203 & 301 & LingBot-VLA (teacher) & 151 & 197 & 291 & 571 \\
                      &    &    &    &     & Fast-WAM (teacher)    & 147 & 186 & 264 & 503 \\
SmolVLA               & 49 & 65 & 97 & 193 & SmolVLA              & 56 & 72 & 103 & 199 \\
\textbf{Ours}         & 50 & 66 & 98 & 193 & \textbf{Ours}        & 57 & 74 & 106 & 202 \\
\bottomrule
\end{tabular*}
\vspace{-15pt}
\end{table}

\begin{wrapfigure}{r}{0.55\linewidth}
    \centering
    \vspace{-15pt}  
    \includegraphics[width=\linewidth]{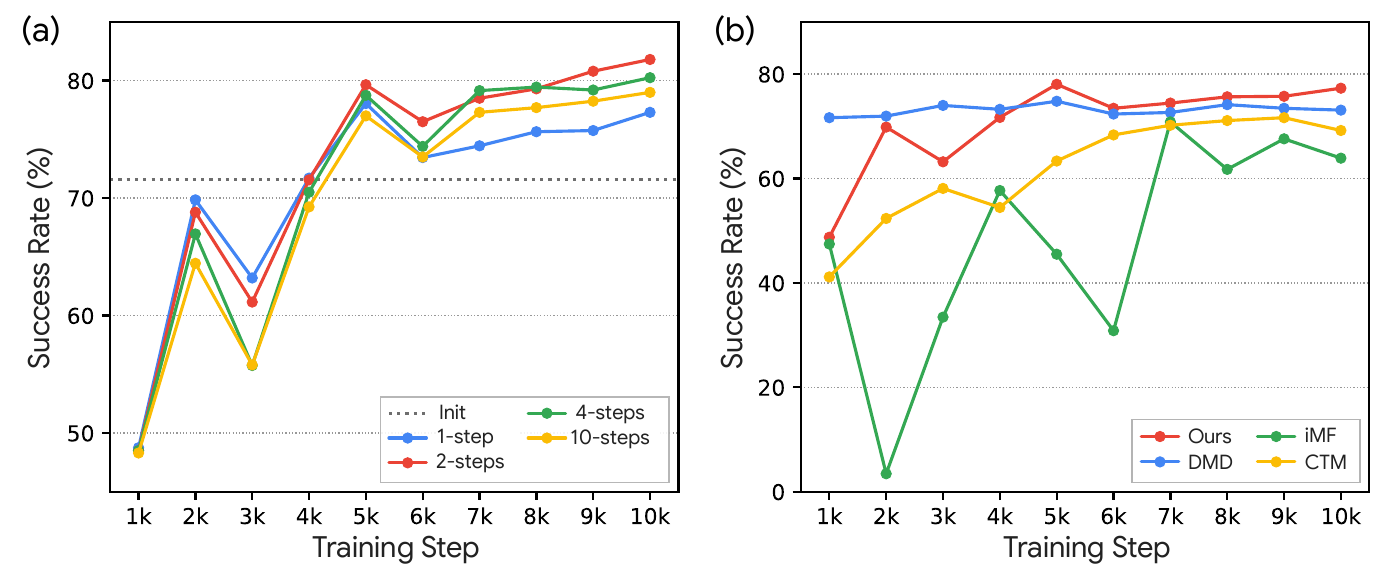}
    \vspace{-7mm} 
    \caption{Success rate (\%) on LIBERO during training: (a) FastOPD at different sampling steps, (b) comparison with baselines at 1 step.}
    \label{fig:qual-sr-steps}
    \vspace{-4mm}
\end{wrapfigure}
\textbf{LIBERO.} In Tab.~\ref{tab:main_results_libero} and Tab.~\ref{tab:latency}, we report the average success rates and inference latency on the LIBERO benchmark. 
Our method outperforms all few-step baselines and the initial student model at every denoising step in average success rate.
As shown in Fig.~\ref{fig:qual-sr-steps}, we observe a consistent performance improvement
across all denoising steps, including at 10 steps. 
FastOPD further achieves the best success rates of 81.8\% with only 2 sampling steps.
In particular, our method shows better performance than the 10-step SmolVLA baseline with only a single step, while reducing inference latency from 193\,ms to 50\,ms.
Together, the results demonstrate that FastOPD substantially improves the accuracy--latency trade-off of the compact student.
We examine the contributions of each loss component to these accuracy gains in Sec.~\ref{sec:ablation}.

\textbf{RoboTwin~2.0.}
Tab.~\ref{tab:main_results_robotwin2} and Tab.~\ref{tab:latency} summarize the results. Notably, our method consistently achieves performance gains in the few-step regime, improving the single-step success rate of the base student from 35.3\% to 51.2\%, and outperforms all few-step distillation baselines at every sampling step.
It does so at a latency comparable to SmolVLA, while being $10.0\times$ faster than the LingBot-VLA teacher.
At 4 and 10 steps, FastOPD performs slightly below the base student, indicating that its benefit lies mainly in
the few-step regime where the base student degrades.

\textbf{Distilling a World Action Model.}
Recent World Action Models (WAMs) build on pretrained video generation models and have emerged as an alternative to foundation VLAs~\citep{kim2026cosmos,ye2026world,yuan2026fast}. To examine whether FastOPD extends beyond VLA teachers, we distill Fast-WAM~\citep{yuan2026fast}, a 6B-parameter video-based teacher, into the same 451M SmolVLA student with an identical training recipe. 
As shown in Tab.~\ref{tab:main_results_robotwin2}, the student distilled from Fast-WAM achieves a 50.5\% average success rate at a single denoising step, 15.2 percentage points over the initial SmolVLA baseline (35.3\%). Furthermore, it reaches 56.3\% at two steps, successfully transferring knowledge despite the substantial differences in architecture and pretraining between the teacher and the student.
This demonstrates that FastOPD is a versatile approach that applies not only to VLAs but also to WAMs, since it only requires the teacher's velocity field on the student's own action trajectories, regardless of how the teacher is built.

\begin{wrapfigure}{r}{0.55\linewidth}
    \centering
    \vspace{-15pt}
    \includegraphics[width=\linewidth]{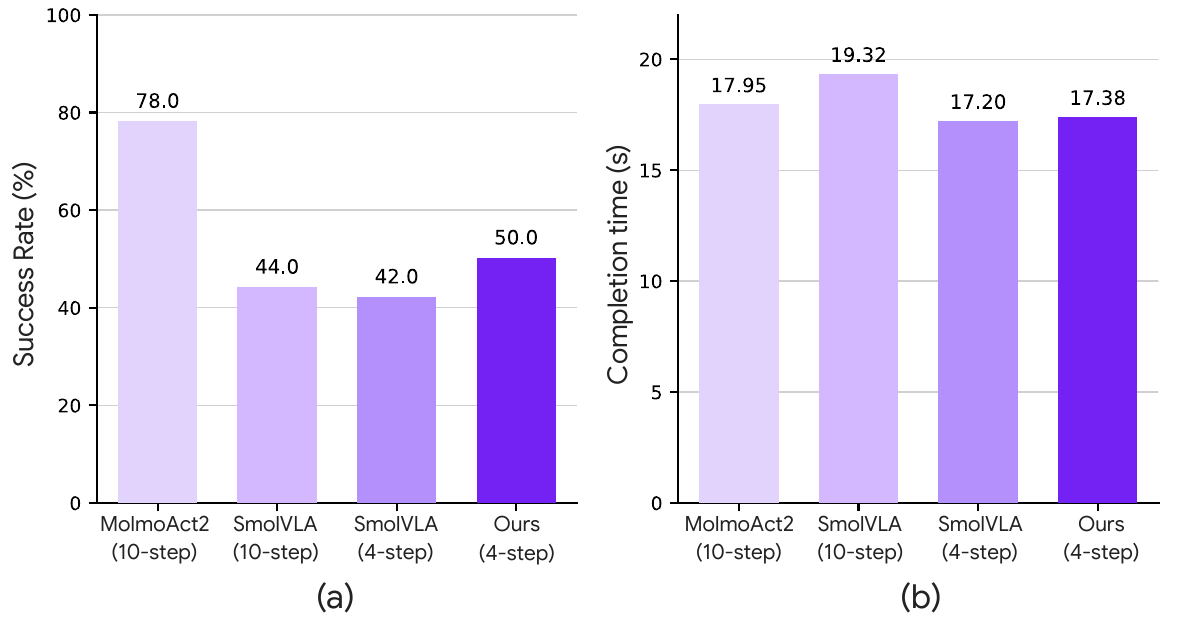}
    \vspace{-20pt}
    \caption{Real-world results on the \texttt{pnp-plate} task. (a) Success rate (\%). (b) Task completion time (s), averaged over successful episodes.}
    \label{fig:real-exp}
    \vspace{1pt}
    \includegraphics[width=\linewidth]{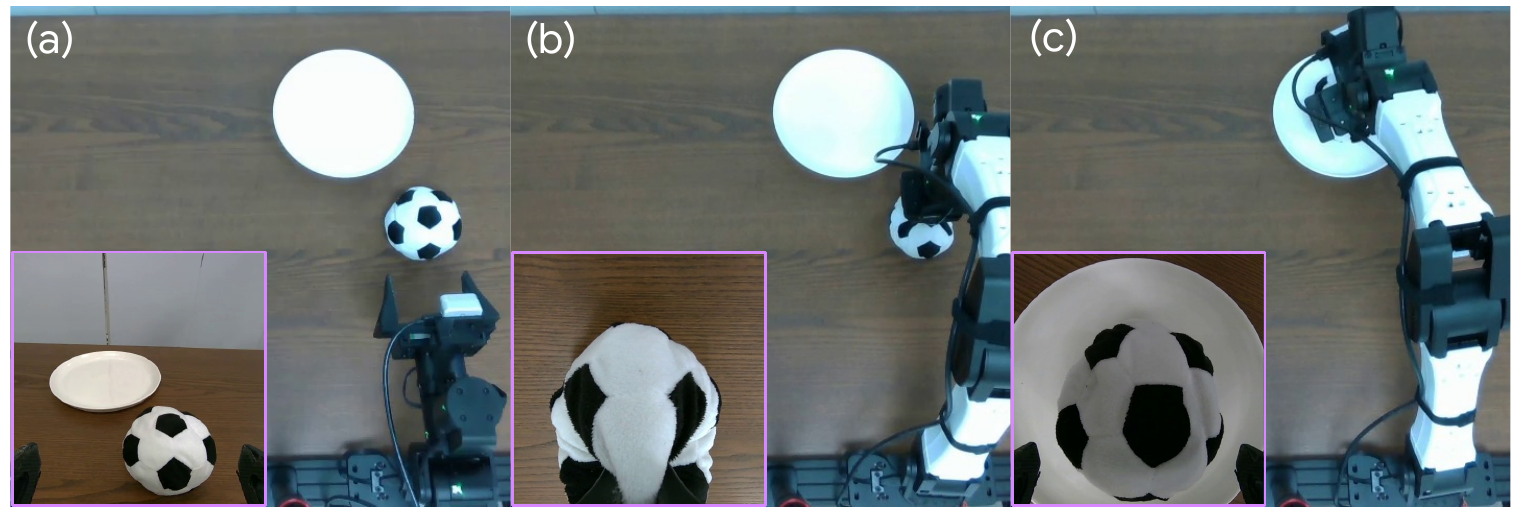}
    \caption{FastOPD on \texttt{pnp-plate}: (a) start, (b) transport, and
(c) placement. Each panel shows the top view with the wrist view inset (lower left).}
    \label{fig:real-rollout}
    \vspace{-5mm}
\end{wrapfigure}
\subsection{Real-World Experiments}
\label{sec:real}
To validate its real-world applicability, we further design a real-world robot scenario, \texttt{pnp-plate}, in which the robot picks up the ball and places it on the plate.
Experiments use the right arm of a bimanual YAM robot with a parallel-jaw
gripper and top-view and wrist-mounted cameras that provide RGB observations.
We train MolmoAct2~\citep{fang2026molmoact2actionreasoningmodels}, an action reasoning model that is suitable for real-world deployment, on our collected demonstrations for 14{,}000 iterations and use the resulting policy as the teacher.
We initialize the student from SmolVLA, warm it up on the collected demonstrations with
the standard flow matching objective (i.e., behavior cloning) for 10{,}000 iterations,
and subsequently distill the teacher into the student using FastOPD. Full experimental details
are provided in Appendix~\ref{app:real-setting}.

Fig.~\ref{fig:real-exp} and Fig.~\ref{fig:real-rollout} summarize the results.
With four denoising steps, FastOPD achieves a 50\% success rate, improving by 8 percentage points over
the 42\% of the base SmolVLA at the same number of steps. 
Notably, FastOPD shows faster and more accurate performance than the 10-step base student, improving the success rate from 44\% to 50\% while cutting the execution time from 19.32\,s to 17.38\,s.
These results demonstrate that FastOPD can transfer the capability of a larger teacher to a
compact student deployed on a physical robot with few steps.

\subsection{Ablation Study}
\label{sec:ablation}

\textbf{Training efficiency.} 
FastOPD reduces training overhead by supervising the student at a single sampled on-policy state rather than across all denoising steps.
As illustrated in Fig.~\ref{fig:figure1}, standard OPD~\citep{li2026diffusionopd} takes 9.51 hours per 1,000 training iterations while FastOPD requires only 1.43 hours, achieving a $5.7\times$ speedup.
This translates directly into accelerated convergence (Fig.~\ref{fig:figure1}). FastOPD achieves a 78.2\% 1-step success rate in just 10 hours, whereas conventional OPD requires roughly 57 hours to reach a similar level (79.1\%). Consequently, FastOPD matches the performance of conventional OPD at a fraction of the computational cost.

\textbf{Student Initialization.}
\label{sec:ode-init-ablation}
We study whether FastOPD can also start from a student that is initialized
in a fully on-policy manner. To do so, we initialize the student using 5,000 iterations of conventional per-step OPD~\citep{li2026diffusionopd} instead of flow-matching behavior cloning.
As shown in Tab.~\ref{tab:two_stage_ablation}, the OPD-initialized student degrades severely at 4 steps (44.2\%), likely due to the fixed 10-step discretization in OPD, whereas FastOPD recovers this case and improves all denoising steps.
 This indicates that OPD is a viable alternative to
flow-matching-based behavior cloning for the initialization, although it incurs a substantially higher
training cost, as discussed above (Fig.~\ref{fig:figure1}). It also shows that FastOPD is not tied to a specific initialization
and can be applied on top of students distilled by other methods.

\begin{table}[t]
\centering
\scriptsize
\setlength{\tabcolsep}{3pt}
\captionsetup{skip=3pt}
\sbox{\ablA}{%
\begin{tabular}{lcccc}
\toprule
\textbf{Method} & \textbf{1} & \textbf{2} & \textbf{4} & \textbf{10} \\
\midrule
FM init.    & 69.1 & 71.4 & 72.8 & 71.6 \\
\rowcolor{lavender}
\,+ FastOPD & 77.3 & 81.8 & 80.3 & 79.0 \\
OPD init.   & 72.7 & 78.2 & 44.2 & 78.6 \\
\rowcolor{lavender}
\,+ FastOPD & \textbf{80.5} & \textbf{83.0} & \textbf{82.2} & \textbf{81.7} \\
\bottomrule
\end{tabular}}%
\sbox{\ablB}{%
\begin{tabular}{lcccc}
\toprule
\textbf{Method} & \textbf{1} & \textbf{2} & \textbf{4} & \textbf{10} \\
\midrule
Base      & 69.1 & 71.4 & 72.8 & 71.6 \\
SC only   & 14.5 & 15.7 & 17.8 & 16.8 \\
OPFD only & 25.0 & 52.6 & 64.0 & 74.7 \\
\rowcolor{lavender} \textbf{FastOPD} & \textbf{77.3} & \textbf{81.8} & \textbf{80.3} & \textbf{79.0} \\
\bottomrule
\end{tabular}}%
\sbox{\ablC}{%
\begin{tabular}{ccccc}
\toprule
$\lambda$ & \textbf{1} & \textbf{2} & \textbf{4} & \textbf{10} \\
\midrule
1    & 77.9 & 82.0 & 83.6 & 82.1 \\
0.5  & 79.3 & 82.9 & \textbf{84.2} & \textbf{84.1} \\
\rowcolor{lavender} 0.1  & \textbf{80.5} & \textbf{83.0} & 82.2 & 81.7 \\
0.01 & 53.0 & 52.6 & 52.9 & 52.0 \\
\bottomrule
\end{tabular}}%
\def\ablgap{0.02\linewidth}%
\edef\ablscale{\fpeval{(\linewidth-2*\ablgap)/(\wd\ablA+\wd\ablB+\wd\ablC)}}%
\begin{minipage}[t]{\ablscale\wd\ablA}\vspace{0pt}\centering
\captionof{table}{Student initialization.}\label{tab:two_stage_ablation}
\resizebox{\linewidth}{!}{\usebox{\ablA}}
\end{minipage}\hfill
\begin{minipage}[t]{\ablscale\wd\ablB}\vspace{0pt}\centering
\captionof{table}{Loss components.}\label{tab:loss_ablation}
\resizebox{\linewidth}{!}{\usebox{\ablB}}
\end{minipage}\hfill
\begin{minipage}[t]{\ablscale\wd\ablC}\vspace{0pt}\centering
\captionof{table}{Effect of $\lambda$.}\label{tab:lambda_ablation}
\resizebox{\linewidth}{!}{\usebox{\ablC}}
\end{minipage}
\vspace{-4mm}
\end{table}

\textbf{Loss components.}
Tab.~\ref{tab:loss_ablation} shows that the two objectives are complementary.
Without teacher anchoring, SC alone collapses below 18\% and OPFD alone fails to capture off-diagonal flow dynamics, degrading few-step generation.  
Their combination reaches 81.8\% with two steps.
While \cite{luan2026snapflow} also anchors a fixed two-step Euler target with flow matching, FastOPD anchors to the teacher via OPFD and enforces consistency over arbitrary intervals, yielding an any-step flow map that inherits the teacher's dynamics (see Appendix~\ref{app:add-exp}).

\begin{wrapfigure}{r}{0.4\linewidth}
    \centering
    \vspace{-10pt}
    \includegraphics[width=\linewidth]{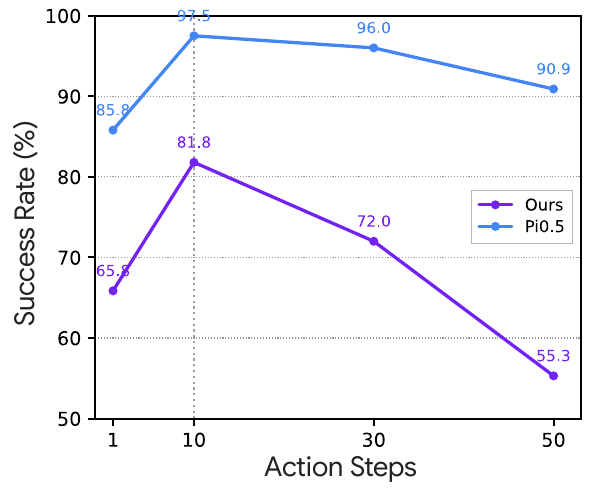}
    \vspace{-20pt}
    \caption{Ablation on action execution steps on LIBERO (NFE$=$2 for Ours and NFE$=$10 for $\pi_{0.5}$).}
    \label{fig:action-steps}
    \vspace{-15pt}
\end{wrapfigure}
\textbf{Hyperparameters.}
The proposed method introduces a single
hyperparameter $\lambda$, which controls the strength of the OPFD
term. To study its effect, we sweep the OPFD weight $\lambda \in \{1, 0.5, 0.1, 0.01\}$ on the OPD-initialized student (Tab.~\ref{tab:lambda_ablation}).
Performance collapses at $\lambda=0.01$, indicating that sufficient OPFD supervision is needed to anchor the self-consistency term, and remains stable within $[0.1, 0.5]$ with a mild trade-off between few- and multi-step performance. 

\textbf{Action execution steps.} The policy executes the first $K$ actions of each predicted chunk before replanning, trading responsiveness for inference overhead. Short execution horizons
improve responsiveness through more frequent replanning, while longer horizons reduce inference overhead at the cost of extended open-loop execution. As shown in Fig.~\ref{fig:action-steps}, we observe that both the teacher $\pi_{0.5}$ and the student perform best at $K=10$, suggesting that the student inherits the teacher's preferred replanning frequency.

\section{Conclusion}
In this paper, we present FastOPD, an efficient on-policy distillation framework that transfers large-scale VLAs into lightweight student models to maximize computational efficiency without compromising performance.
Specifically, FastOPD streamlines the iterative denoising process by anchoring a single on-policy state to the teacher, coupled with a self-consistency objective for finite-interval transitions.
In simulation and real-robot experiments across diverse VLA and WAM teachers, FastOPD improves over the base student in the few-step regime and outperforms existing few-step distillation methods, while substantially reducing training and inference cost. We hope this framework facilitates scalable, real-time deployment of foundation policies on physical robots.

\bibliography{iclr2027_conference}
\bibliographystyle{iclr2027_conference}

\appendix
\section{Proofs}
\label{app:proof}

\opfdprop*
\begin{proof}
For the linear interpolant
\[
\vx_t=(1-t)\vx_0+t\vx_1,
\qquad
\vx_0\sim\mathcal D,
\qquad
\vx_1\sim\mathcal N(\mathbf 0,\mathbf I),
\]
the score and the corresponding instantaneous velocity field satisfy
\begin{equation}
\nabla_{\vx}\log p_t(\vx)
=
-\frac{1}{t}
\left(
\vx+(1-t)\vv(\vx,t)
\right).
\label{eq:score-velocity-relation}
\end{equation}
Following the flow-to-SDE conversion of \citet{liu2025flow}, for any diffusion coefficient $\sigma_t>0$, the
reverse-time SDE that shares the same marginal $p_t$ of PF-ODE is defined as
\begin{equation}
d\vx_t
=
\left(
\vv(\vx_t,t)
-\frac{\sigma_t^2}{2}
\nabla_{\vx}\log p_t(\vx_t)
\right)dt
+
\sigma_t\,d\bar{\mathbf w}_t .
\label{eq:app-reverse-sde}
\end{equation}
Substituting \eqref{eq:score-velocity-relation} into
\eqref{eq:app-reverse-sde} gives
\begin{equation}
d\vx_t
=
\left[
\left(
1+\frac{\sigma_t^2(1-t)}{2t}
\right)\vv(\vx_t,t)
+
\frac{\sigma_t^2}{2t}\vx_t
\right]dt
+
\sigma_t\,d\bar{\mathbf w}_t .
\label{eq:app-reverse-sde-closed}
\end{equation}

A single Euler--Maruyama step of size $\Delta t$ from
$\tilde{\vx}_t$ therefore defines the Gaussian transition kernel
\begin{equation}
p_{\vv}(\vx_{t+\Delta t}\mid\tilde{\vx}_t)
=
\mathcal N
\left(
\mu_{\vv},
\sigma_t^2|\Delta t|\,\mathbf I_d
\right),
\label{eq:app-kernel}
\end{equation}
where
\begin{equation}
\mu_{\vv}
=
\left(
1+\frac{\sigma_t^2}{2t}\Delta t
\right)\tilde{\vx}_t
+
\left(
1+\frac{\sigma_t^2(1-t)}{2t}
\right)
\vv(\tilde{\vx}_t,t)\Delta t .
\end{equation}

For the flow-map parameterization $\vf_\theta(\vx,s,t)=\vx+(t-s)\vu_\theta(\vx,s,t)$,
its tangent velocity is
\begin{equation}
\left.
\partial_\tau\vf_\theta(\vx,t,\tau)
\right|_{\tau=t}
=
\vu_\theta(\vx,t,t).
\label{eq:flow-map-tangent}
\end{equation}
We consequently define the student and teacher velocities at
$\tilde{\vx}_t$ by
\[
\vv_S
:=
\vu_\theta(\tilde{\vx}_t,t,t),
\qquad
\vv_T
:=
\vv_\phi(\tilde{\vx}_t,t).
\]
Applying \eqref{eq:app-kernel} to these two velocities gives
\[
p_\theta(\cdot\mid\tilde{\vx}_t)
=
\mathcal N
\left(
\mu_S,\sigma_t^2|\Delta t|\,\mathbf I_d
\right),
\qquad
p_\phi(\cdot\mid\tilde{\vx}_t)
=
\mathcal N
\left(
\mu_T,\sigma_t^2|\Delta t|\,\mathbf I_d
\right).
\]
Their covariances are identical, and the common
$\tilde{\vx}_t$-dependent term cancels from the difference of their
means. Hence,
\begin{equation}
\mu_S-\mu_T
=
\left(
1+\frac{\sigma_t^2(1-t)}{2t}
\right)
\Delta t\,
(\vv_S-\vv_T).
\end{equation}

Using the KL divergence between Gaussians with common covariance,
we obtain
\begin{align}
\KL\left(
p_\theta(\cdot\mid\tilde{\vx}_t)
\,\middle\|\,
p_\phi(\cdot\mid\tilde{\vx}_t)
\right)
&=
\frac{\|\mu_S-\mu_T\|_2^2}
     {2\sigma_t^2|\Delta t|}
\\
&=
\frac{|\Delta t|}{2\sigma_t^2}
\left(
1+\frac{\sigma_t^2(1-t)}{2t}
\right)^2
\|\vv_S-\vv_T\|_2^2
\\
&=
w_\sigma(t,\Delta t)
\left\|
\vu_\theta(\tilde{\vx}_t,t,t)
-
\vv_\phi(\tilde{\vx}_t,t)
\right\|_2^2,
\end{align}
where
\begin{equation}
w_\sigma(t,\Delta t)
:=
\frac{|\Delta t|}{2\sigma_t^2}
\left(
1+\frac{\sigma_t^2(1-t)}{2t}
\right)^2.
\end{equation}
\end{proof}

\boundprop*
\begin{proof}
Let $(s,t,\vx)\sim q_\theta$ with $\vx=\vx_s=(1-s)\vx_0+s\vx_1$ and $m=(s+t)/2$. Since $s>t$, we have $|m-s|=(s-t)/2\le 1/2$.

We define the self-consistency target of the student as
\begin{equation}
    T_\theta(\vx, s, t) = \frac{1}{2} \big[ \vv_\theta(\vx, s) + \vv_\theta(\hat{\vx}_m, m)\big],
\end{equation}
where $\hat{\vx}_m = \vx + (m-s)\,\vv_\theta(\vx,s)$, and denote the velocity errors at $s$ and $m$ by
\begin{equation}
    e_s := \|\vv_\theta(\vx,s)-\vv_\phi(\vx,s)\|_2,
    \qquad
    e_m := \|\vv_\theta(\hat{\vx}_m,m)-\vv_\phi(\hat{\vx}_m,m)\|_2 .
\end{equation}

Using the definition of $\vu_\phi$, we decompose the difference between the student map and the ideal shortcut as
\begin{align}
    \vu_\theta(\vx, s, t) - \vu_\phi(\vx, s, t)
    &= \big(\vu_\theta(\vx, s, t) - T_\theta\big) + \big(T_\theta - \vu_\phi(\vx, s, t)\big) \nonumber \\
    &= \big(\vu_\theta(\vx, s, t) - T_\theta\big)
    + \tfrac{1}{2}\big[ \vv_\theta(\vx, s) - \vv_\phi(\vx, s)\big] \nonumber \\
    &\quad + \tfrac{1}{2}\big[ \vv_\theta(\hat{\vx}_m, m) - \vv_\phi(\hat{\vx}_m, m)\big]
    + \tfrac{1}{2}\big[ \vv_\phi(\hat{\vx}_m, m) - \vv_\phi(\bar\vx_m, m)\big],
    \label{eq:decomp}
\end{align}
where the first term is the self-consistency error, the second and third terms are the velocity errors at $s$ and $m$, respectively, and the last term is the propagated state error induced by the mismatch between the student midpoint $\hat{\vx}_m$ and the ideal midpoint $\bar{\vx}_m$.

By the $L$-Lipschitzness of the teacher,
\begin{equation}
    \| \vv_\phi(\hat{\vx}_m, m) - \vv_\phi(\bar\vx_m, m) \|_2
    \leq L \| \hat{\vx}_m - \bar\vx_m \|_2
    = L\,|m-s|\, e_s .
\end{equation}

Therefore,
\begin{align}
    \|\vu_\theta(\vx, s, t) - \vu_\phi(\vx, s, t) \|_2^2
    &\leq 4 \| \vu_\theta(\vx, s, t) - T_\theta \|_2^2
    + e_s^2 + e_m^2 + L^2 (m-s)^2 e_s^2 \nonumber \\
    &\leq 4 \| \vu_\theta(\vx, s, t) - T_\theta \|_2^2
    + \Big(1 + \frac{L^2}{4}\Big) e_s^2
    + e_m^2 ,
    \label{eq:pointwise}
\end{align}
where the last inequality uses $|m-s|\le 1/2$.

By taking the expectation of \eqref{eq:pointwise} over $q_\theta$, the left-hand side becomes $D_{q_\theta}(\vu_\theta,\vu_\phi)$ and the first term on the right-hand side becomes $4\mathcal{L}_{\text{SC}}$.
The remaining two terms are velocity errors evaluated at $(s,\vx_s)$ and $(m,\hat{\vx}_m)$, whereas $\mathcal{L}_{\text{OPFD}}$ averages the same error over $(t,\tilde{\vx}_t)\sim\rho_\theta$. 

Let $g(t,\vx):=\|\vv_\theta(\vx,t)-\vv_\phi(\vx,t)\|_2^2\ge 0$, so that $\mathcal{L}_{\text{OPFD}}=\mathbb{E}_{\rho_\theta}[g]$. 

Then, for any distribution $\nu$ whose density ratio with respect to $\rho_\theta$ is at most $\kappa$,
\begin{equation}
    \mathbb{E}_{\nu}[g] = \mathbb{E}_{\rho_\theta}\Big[g\,\frac{d\nu}{d\rho_\theta}\Big] \le \kappa\,\mathbb{E}_{\rho_\theta}[g] = \kappa\,\mathcal{L}_{\text{OPFD}} .
\end{equation}
Applying this with $\nu$ the distributions of $(s,\vx_s)$ and $(m,\hat{\vx}_m)$, which satisfy this condition by the density-ratio assumption, gives $\mathbb{E}_{q_\theta}[e_s^2]\le\kappa\mathcal{L}_{\text{OPFD}}$ and $\mathbb{E}_{q_\theta}[e_m^2]\le\kappa\mathcal{L}_{\text{OPFD}}$. 

Therefore,
\begin{equation}
\begin{aligned}
D_{q_\theta}(\vu_\theta,\vu_\phi)
&\le 4\,\mathcal{L}_{\text{SC}}
  + \kappa\Big(1+\frac{L^2}{4}\Big)\mathcal{L}_{\text{OPFD}}
  + \kappa\,\mathcal{L}_{\text{OPFD}} \\
&= 4\,\mathcal{L}_{\text{SC}}
  + \kappa\Big(2+\frac{L^2}{4}\Big)\mathcal{L}_{\text{OPFD}} \\
&\le \max\Big\{4,\,
  \frac{\kappa(2+L^2/4)}{\lambda}\Big\}
  \,\mathcal{L}_{\text{FastOPD}} \\
&= C_\lambda\,\mathcal{L}_{\text{FastOPD}}.
\end{aligned}
\end{equation}

Here, $C_\lambda$ grows as $\lambda\to 0$, reflecting that the self-consistency term alone cannot anchor the student to the teacher.

\end{proof}

\main*
\begin{proof}
Let the student and the teacher-induced shortcut model start from the same initial noise sample, $\vx_{t_0}^\theta = \bar\vx_{t_0} \sim \mathcal{N}(\mathbf{0}, \mathbf I)$, and follow the same sampling schedule
\begin{equation}
    1 = t_0 > t_1 > \cdots > t_K = 0
\end{equation}
with step sizes $h_k := t_k - t_{k+1} > 0$.

Their rollout states are defined by
\begin{equation}
\vx_{t_{k+1}}^\theta = \vf_\theta(\vx_{t_k}^\theta, t_k, t_{k+1})
\quad \text{and} \quad
\bar\vx_{t_{k+1}} = \vf_{\phi}(\bar\vx_{t_k}, t_k, t_{k+1}),
\end{equation}
 respectively. 
 
By the definition of $\vu_\phi$ and $m_k=(t_k+t_{k+1})/2$,
\begin{equation}
    \vf_{\phi}(\vx,t_k,t_{k+1}) = \vx - h_k\vu_\phi(\vx,t_k,t_{k+1})
    = \bar\vx_{m_k} - \tfrac{h_k}{2}\,\vv_\phi(\bar\vx_{m_k},m_k),  
\end{equation}
where $\bar\vx_{m_k} = \vx - \tfrac{h_k}{2}\,\vv_\phi(\vx,t_k)$.

Thus, $\vf_{\phi}(\cdot,t_k,t_{k+1})$ is the composition of two Euler steps of size $h_k/2$.
Since $\vv_\phi(\cdot,t)$ is $L$-Lipschitz, each Euler step is $(1+h_kL/2)$-Lipschitz, and hence $\vf_{\phi}(\cdot,t_k,t_{k+1})$ is $L_{F,k}$-Lipschitz with

\begin{equation}
    L_{F,k} := \Big(1+\frac{h_k L}{2}\Big)^2 .
\end{equation}
 
Then,
\begin{align}
    \| \vx_{t_{k+1}}^\theta - \bar\vx_{t_{k+1}}\|_2
    &= \| \vf_\theta(\vx_{t_k}^\theta, t_k, t_{k+1}) - \vf_{\phi}(\bar{\vx}_{t_k}, t_k, t_{k+1}) \|_2 \nonumber \\
    &\leq \|\vf_\theta(\vx_{t_k}^\theta, t_k, t_{k+1}) - \vf_{\phi}(\vx_{t_k}^\theta, t_k, t_{k+1})\|_2 \nonumber \\
    &\quad + \|\vf_{\phi}(\vx_{t_k}^\theta, t_k, t_{k+1}) - \vf_{\phi}(\bar\vx_{t_k}, t_k, t_{k+1})\|_2 \nonumber \\
    &\leq h_k \| \vu_\theta( \vx_{t_k}^\theta, t_k, t_{k+1}) - \vu_\phi(\vx_{t_k}^\theta, t_k, t_{k+1} ) \|_2 + L_{F,k} \|\vx_{t_k}^\theta -  \bar\vx_{t_k}\|_2.
\end{align}

Denote $E_{k} := \mathbb{E}\big[ \| \vx_{t_k}^\theta - \bar\vx_{t_k} \|_2^2 \big]$ and $D_k$ as in Theorem~\ref{theorem:main}. 
Then we obtain
\begin{equation}
    E_{k+1} \leq 2h_k^2 D_k + 2L_{F,k}^2 E_k.
    \label{eqn:recursion_2}
\end{equation}
 
Since the two trajectories start from the same noise sample, $E_0=0$. Unrolling \eqref{eqn:recursion_2} gives
\begin{equation}
    E_K \leq \sum_{k=0}^{K-1} a_k D_k,
    \qquad
    a_k := 2h_k^2\prod_{j=k+1}^{K-1} 2L_{F,j}^2.
\end{equation}
By the assumption of $\kappa_{\mathrm{roll}}$ and Proposition~\ref{prop:prop2},
\begin{equation}
    E_K \leq \kappa_{\mathrm{roll}}\Big(\sum_{k=0}^{K-1} a_k\Big) D_{q_\theta}(\vu_\theta,\vu_\phi)
    \leq \kappa_{\mathrm{roll}}\Big(\sum_{k=0}^{K-1} a_k\Big) C_\lambda\, \mathcal{L}_{\text{FastOPD}} .
    \label{eqn:bound1}
\end{equation}
 
The pair $(\vx_{t_K}^\theta, \bar\vx_{t_K})$ constructed from the same initial noise is a valid coupling of $p_\theta$ and $p_{\phi}$. Therefore,
\begin{equation}
    W_2^2(p_\theta, p_{\phi}) \leq \mathbb{E} \big[ \| \vx_{t_K}^\theta - \bar\vx_{t_K} \|^2_2\big] = E_K
    \leq C^2\, \mathcal{L}_{\text{FastOPD}},
    \qquad
    C^2 := \kappa_{\mathrm{roll}}\, C_\lambda \sum_{k=0}^{K-1} a_k .
\end{equation}
Taking square roots on both sides yields 
\begin{equation}
    W_2(p_\theta, p_{\phi}) \leq C \sqrt{\mathcal{L}_{\text{FastOPD}}},
\end{equation}
hence $\mathcal{L}_{\text{FastOPD}} \to 0$ implies $W_2(p_\theta,p_{\phi}) \to 0$.
\end{proof}

\section{Experimental Details}
\label{app:exp-details}

\subsection{Implementation Details.}
All experiments are implemented within the LeRobot framework~\citep{cadene2024lerobot}.
We report the public checkpoints used for teacher and student initialization in Tab.~\ref{tab:model_checkpoints}. For two-time projection, the source and target timesteps are embedded using fixed sinusoidal positional encodings, and the remaining weights of the time projection layer $\psi(\cdot)$ are randomly initialized.

\begin{table}[H]
\centering
\caption{
Model checkpoints used in the experiments.
}
\label{tab:model_checkpoints}
\small
\renewcommand{\arraystretch}{1.0}
\setlength{\tabcolsep}{5pt}

\begin{tabularx}{
    \linewidth
}{
    @{}
    l
    l
    >{\raggedright\arraybackslash}X
    @{}
}
\toprule
\textbf{Benchmark}
& \textbf{Model}
& \textbf{Checkpoint} \\
\midrule

\multirow{3}{*}{LIBERO}
& $\pi_{0.5}$
& \nolinkurl{lerobot/pi05_libero_finetuned_v044} \\

& SmolVLA 
& \nolinkurl{HuggingFaceVLA/smolvla_libero} \\

& SmolVLA (OPD Init)
& \nolinkurl{lerobot/smolvla_base} \\
\midrule

\multirow{3}{*}{RoboTwin 2.0}
& LingBot-VLA
& \nolinkurl{robbyant/lingbot-vla-4b-posttrain-robotwin} \\

& Fast-WAM
& \nolinkurl{lerobot/fastwam_robotwin_uncond_3cam_384} \\

& SmolVLA
& \nolinkurl{lerobot/smolvla_robotwin} \\

\midrule
Real-world
& MolmoAct2
& \nolinkurl{allenai/MolmoAct2-BimanualYAM}  \\

& SmolVLA
& \nolinkurl{lerobot/smolvla_base} \\

\bottomrule
\end{tabularx}
\end{table}

\subsection{Configuration Details.} 
In Tab.~\ref{tab:configurations}, we report training and evaluation configurations for full reproducibility.

\begin{table}[H]
\centering
\caption{
Training and evaluation configurations for FastOPD.
}
\label{tab:configurations}

\scriptsize
\setlength{\tabcolsep}{4pt}
\renewcommand{\arraystretch}{1.1}
\resizebox{\linewidth}{!}{%
\begin{tabular}{lcccc}
\toprule
& \textbf{LIBERO} & \multicolumn{2}{c}{\textbf{RoboTwin~2.0}} & \textbf{Real-world} \\
\midrule
\multicolumn{5}{l}{\textit{Model}} \\
\midrule
Teacher (Params)
    & $\pi_{0.5}$ (3.3B)
    & LingBot-VLA (4.2B)
    & Fast-WAM (6B)
    & MolmoAct2 (5B) \\
Student (Params)
    & SmolVLA + $\psi$ (451M)
    & SmolVLA + $\psi$ (451M)
    & SmolVLA + $\psi$ (451M)
    & SmolVLA + $\psi$ (451M) \\

\midrule
\multicolumn{5}{l}{\textit{Optimization}} \\
\midrule
Optimizer
    & AdamW & AdamW & AdamW & AdamW \\
Adam $(\beta_1, \beta_2)$
    & $(0.9, 0.95)$ & $(0.9, 0.95)$ & $(0.9, 0.95)$ & $(0.9, 0.95)$ \\
Adam $\epsilon$
    & $10^{-8}$ & $10^{-8}$ & $10^{-8}$ & $10^{-8}$ \\
Adam weight decay
    & $10^{-10}$ & $10^{-10}$ & $10^{-10}$ & $10^{-10}$ \\
Gradient clipping
    & 10.0 & 10.0 & 10.0 & 10.0 \\
Scheduler
    & Cosine & Cosine & Cosine & Cosine \\
Learning rate
    & $1\times10^{-4}$ & $1\times10^{-4}$ & $1\times10^{-4}$ & $2\times10^{-5}$ \\

\midrule
\multicolumn{5}{l}{\textit{FastOPD training}} \\
\midrule
Training iterations
    & 10K & 50K & 50K & 20K \\
Batch size
    & 64 & 128 & 128 & 64 \\
$\lambda$
    & 0.1 & 0.1 & 0.1 & 1.0 \\
Time sampling distribution $p$
    & Beta(1.5,1)
    & Beta(1.5,1)
    & Shifted uniform ($s=5$)\footnotemark
    & Beta(1.5,1) \\

\midrule
\multicolumn{5}{l}{\textit{Evaluation}} \\
\midrule
Time discretization
    & Uniform & Uniform & Uniform & Uniform \\
Action execution steps $K$
    & 10 & 50 & 24 & 30 \\

\bottomrule
\end{tabular}%
}
\end{table}
\footnotetext{
We sample $u \sim \mathcal{U}(0,1)$ and apply the flow-shift transformation
$t = \frac{s u}{1 + (s-1)u}$ with $s=5$, following Fast-WAM~\citep{yuan2026fast} implementation.
}

\subsection{Computational Resources}
For full reproducibility, we report the computational resources used in our experiments in Tab.~\ref{tab:hardware}.
Latency is measured on a single RTX~3090
(App.~\ref{app:exp-details}), and real-world policies are deployed on a single RTX~4090
(App.~\ref{app:real-setting}).

\begin{table}[h]
\centering
\caption{Hardware used for training.}
\label{tab:hardware}
\small
\setlength{\tabcolsep}{6pt}
\renewcommand{\arraystretch}{1.0}
\begin{tabular}{ll}
\toprule
\textbf{Experiment} & \textbf{Hardware} \\
\midrule
LIBERO (main)
& $1 \times$ NVIDIA RTX 4090 (24\,GB) \\

LIBERO (ablations)
& $1 \times$ NVIDIA RTX 4090 or RTX 3090 (24\,GB each) \\

RoboTwin 2.0 (LingBot-VLA)
& $2 \times$ NVIDIA RTX 4090 (24\,GB each) \\

RoboTwin 2.0 (Fast-WAM)
& $1 \times$ NVIDIA H200 (141\,GB) \\

Real-world
& $1 \times$ NVIDIA B200 (180\,GB) \\
\bottomrule
\end{tabular}
\end{table}

\subsection{Simulation Experimental Details.}
Evaluation uses 50 episodes per task (2,000 episodes on LIBERO; 2,500 per setting on RoboTwin 2.0), and all results are from a single training run (seed 100000). 
Following the latency details of \cite{luan2026snapflow}, we measure end-to-end inference latency on a single NVIDIA RTX~3090 (CUDA~12.8, PyTorch~2.11) in bfloat16 with batch size 1, excluding pre- and post-processing, and report the median of 100 runs after 10 warm-up runs.\footnote{Fast-WAM does not fit in 24\,GB memory with its text encoder, so we time the encoder separately and add it.}

\subsection{Real-world Experimental Details}
\label{app:real-setting}

\textbf{Robot setup.} 
We follow the real-robot setup of MolmoAct2~\citep{fang2026molmoact2actionreasoningmodels}. Specifically, we use the right arm of a bimanual YAM robot with six joints and a parallel-jaw
gripper, controlled by absolute joint-position commands (7-dimensional action space), with the
joint velocity capped at 0.75\,rad/s (Fig.~\ref{app:robot-setup}).
The policy observes two RGB views (top and right wrist; $480\times640$, 30\,fps; current frame
only) together with the 7-dimensional proprioceptive state (six joints and gripper), and receives
the language instruction.
Actions and states are normalized with quantile normalization for all models (the gripper is not normalized for MolmoAct2). All policies are run on a single NVIDIA RTX~4090, with SmolVLA-based
models in float32.

\begin{figure}[t]
    \centering
    \includegraphics[width=\linewidth]{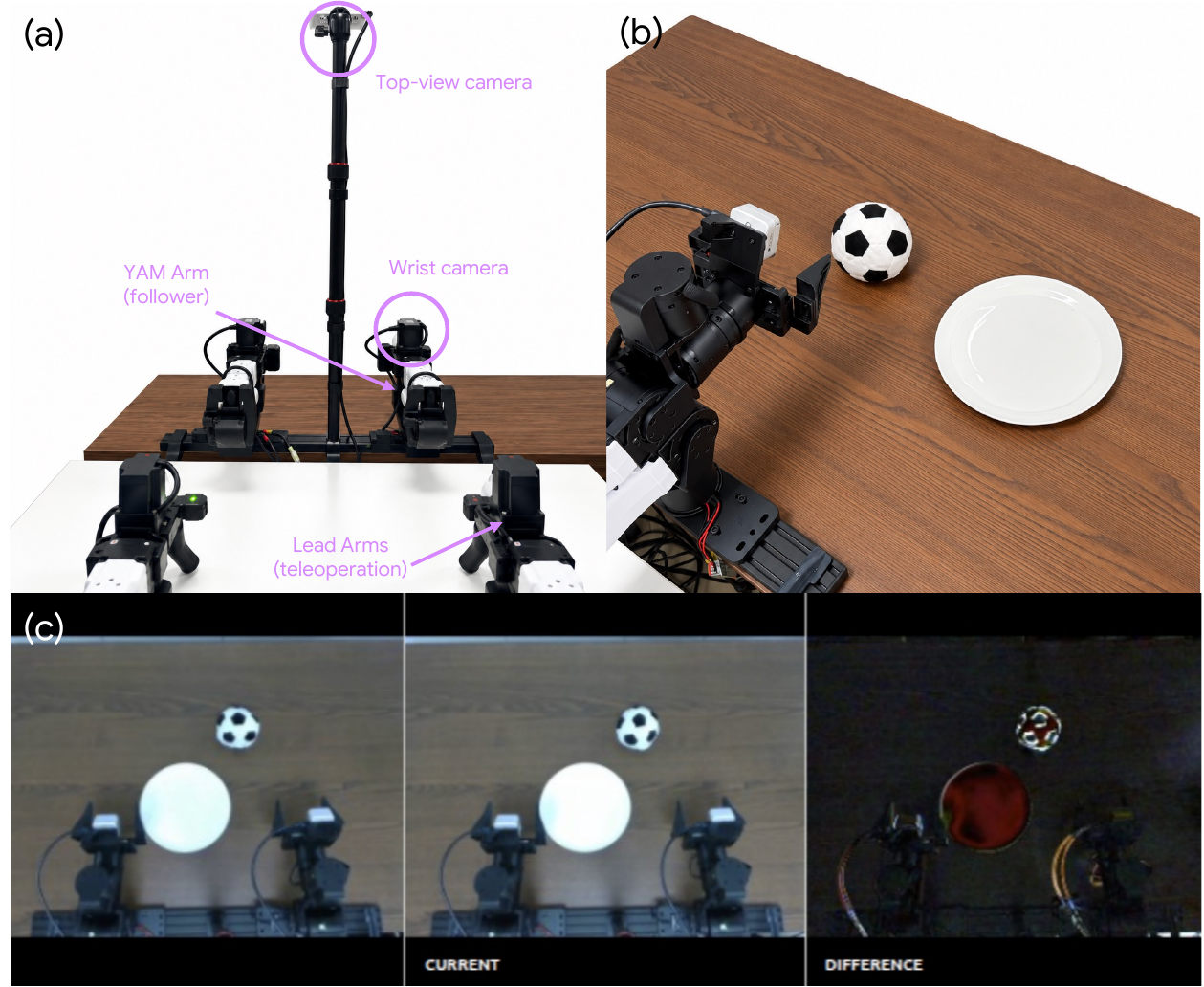}
\caption{\textbf{Robot setting.}
(a) Teleoperation setup for data collection, with lead arms, the follower YAM arm, and top-view and wrist cameras.
(b) Close-up view of the evaluation setup.
(c) Object-position initialization: the current top-camera frame is compared against a reference frame, and their pixel-wise difference is used to place objects at the same initial positions across trials.}
    \label{app:robot-setup}
\end{figure}

\textbf{Dataset.} 
We collect teleoperated demonstrations of multiple tasks on the YAM arm and train all policies
on the same dataset (Tab.~\ref{tab:real_world_dataset}). The evaluation is performed on \texttt{pnp-plate}. The teacher MolmoAct2 is fine-tuned on this dataset for 14{,}000 iterations.
The student is initialized from SmolVLA, warmed up with flow-matching-based behavior cloning for
10{,}000 iterations, and then trained with FastOPD for 20{,}000 iterations.
At inference, the type of target objects including a ball and a plate are fixed and the language instruction is fixed as ``Pick up the ball and place it on the plate''.
\begin{table}[h]
\centering
\caption{Real-world dataset statistics. Average episode length is computed at 30\,fps.}
\label{tab:real_world_dataset}
\small
\begin{tabularx}{\linewidth}{@{}l>{\raggedright\arraybackslash}Xccc@{}}
\toprule
\textbf{Task} & \textbf{Language instruction} & \textbf{Episodes} & \textbf{Frames} & \textbf{Avg. Length (s)} \\
\midrule
\texttt{pnp-plate}  & ``Pick up the ball and place it on the plate.''                 & 106 & 47{,}002 & 14.8 \\
\texttt{pnp-box}    & ``Pick up the basketball and place it in the box.''             & 108 & 47{,}616 & 14.7 \\
\multirow{2}{*}{\texttt{open-close}}
       & ``Open the box, put the ball inside, and close it.'' & 53 & 44{,}780 & 28.2 \\
       & ``Open the box, put the basketball inside, and close it.'' & 55 & 46{,}834 & 28.4 \\
\multirow{2}{*}{\texttt{open-place}}
       & ``Open the box, take the ball out, and place it in the pot.'' & 52 & 41{,}637 & 26.7 \\
       & ``Open the box, take the ball out, and place it on the tray.'' & 50 & 40{,}676 & 27.1 \\
\midrule
Overall             &                                                                 & 424 & 268{,}545 & 21.1 \\
\bottomrule
\end{tabularx}
\end{table}
\begin{figure}[t]
    \centering
    \includegraphics[width=\linewidth]{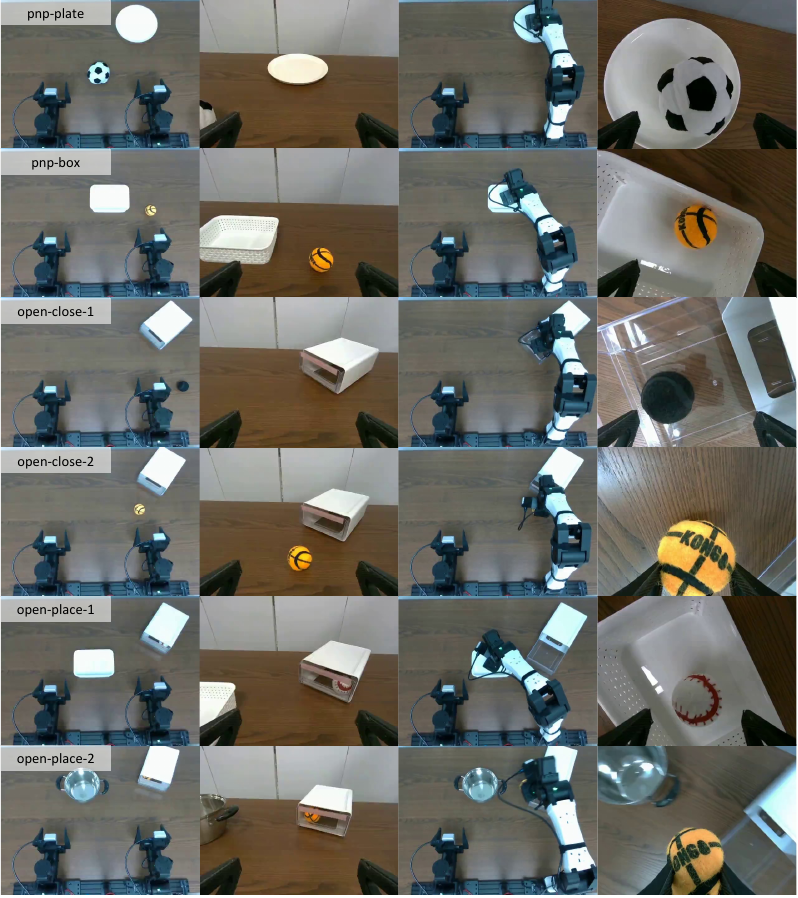}
    \caption{Real-world manipulation task setups on the YAM robot arm. Left two columns show the initial scene and the right show the robot during execution.}
    \label{fig:real-setup}
\end{figure}

\textbf{Policy execution.} Tab.~\ref{tab:real_exec} summarizes the execution settings. All
policies predict 30-step action chunks (1.0\,s) and send commands at 30\,Hz. The SmolVLA (warm-up) and our policy share the same synchronous execution scheme. MolmoAct2 is run with its default asynchronous execution, replanning every 0.5\,s,
skipping the actions already covered by the inference delay, and linearly blending consecutive
chunks over 2--10 steps. No real-time chunking or gripper post-processing is applied to any model.
\begin{table}[h]
\centering
\caption{Real-world policy execution settings. All policies predict 30-step action chunks (1.0\,s)
and send commands at 30\,Hz. For SmolVLA, values are reported for 4 and 10 denoising steps.
Inference latency is measured on the robot side, including HTTP communication.}
\label{tab:real_exec}
\small
\setlength{\tabcolsep}{4pt}
\begin{tabular}{lccc}
\toprule
& \textbf{MolmoAct2} & \textbf{SmolVLA (warm-up)} & \textbf{Ours} \\
\midrule
Execution                  & async (re-plan every 0.5\,s) & sync & sync \\
Sampler                    & Euler, 10 steps & Euler, 4 / 10 steps & flow map, 4 steps \\
Actions executed per chunk & $\sim$15 & 30 & 30 \\
Chunk blending             & linear (2--10 steps) & none & none \\
Inference latency          & 300--450\,ms & 118 / 178\,ms & 120\,ms \\
Re-planning rate           & 1.8--1.9\,Hz & 0.89 / 0.86\,Hz & 0.87\,Hz \\
\bottomrule
\end{tabular}
\end{table}

\textbf{Evaluation details.} Each policy is evaluated over 50 episodes with a time limit of 30\,s. An episode is counted as successful once the ball has been placed entirely within the
plate and the gripper has opened. We report the
success rate and the task completion time, the latter averaged over successful episodes only, following the \cite{wang2024one} evaluation protocol.
To ensure a fair comparison, the initial scene of every episode is matched to the corresponding episode of a reference evaluation session by aligning the current top-camera frame with the
reference image (Fig.~\ref{app:robot-setup}(c)), so that all policies start from identical
object configurations.

\section{Additional Experiments}
\label{app:add-exp}
\textbf{Connection with previous works.}
Although SnapFlow~\citep{luan2026snapflow} adopts a two-step consistency target similar to ours, the two objectives differ in both their anchor and their scope.
SnapFlow is a self-distillation method that combines flow matching on ground-truth actions with a shortcut loss defined only on the fixed interval from $t=1$ to $t=0$:
\begin{equation}
    \mathcal{L}_{\text{SnapFlow}} = \lambda_1\,\mathcal{L}_{\text{FM}} + (1-\lambda_1)\lambda_2\,\mathcal{L}_{\text{shortcut}}, \qquad
    \mathcal{L}_{\text{shortcut}} = \big\|\vu_\theta(\vx_1,1,0\mid c) - \vv_{\text{tgt}}\big\|_2^2,
\end{equation}
where $\vv_{\text{tgt}} = \tfrac{1}{2}\big[\mathrm{sg}[\vv_\theta(\vx_1,1\mid c)] + \mathrm{sg}[\vv_\theta(\hat{\vx}_{0.5},0.5\mid c)]\big]$ and $\hat{\vx}_{0.5} = \vx_1 - 0.5\,\mathrm{sg}[\vv_\theta(\vx_1,1\mid c)]$.
In contrast, FastOPD enforces self-consistency over arbitrary intervals $(s,t)$, yielding a flow map that supports any number of sampling steps, and anchors the target to the teacher through OPFD rather than to the data through flow matching.
SnapFlow can thus be viewed as a teacher-free counterpart of our objective restricted to a single interval, which allows us to isolate the contribution of teacher supervision.

\textbf{Student initialization.}
\label{app:student-init}
In Sec.~\ref{sec:ode-init-ablation}, we studied on-policy distillation (OPD) as an alternative initialization strategy for the flow map.
Specifically, we first train the SmolVLA base model with OPD for 5,000 iterations (Stage~1), and then fine-tune it into a flow map with the FastOPD objective for 10,000 iterations (Stage~2).
Tab.~\ref{tab:init-full} reports the per-suite results, and Fig.~\ref{fig:qual-sr-steps-opd} shows the success rate over training iterations.
As in Fig.~\ref{fig:qual-sr-steps}, FastOPD consistently improves performance over the course of training under this OPD-based initialization as well.

\begin{table}[h]
\centering
\small
\setlength{\tabcolsep}{5pt}
\caption{Per-suite results of OPD-based initialization. Success rates (\%) on LIBERO.}
\label{tab:init-full}
\begin{tabular}{lcccccc}
\toprule
\textbf{Method} & \textbf{NFE} & \textbf{Spatial} & \textbf{Object} & \textbf{Goal} & \textbf{Long} & \textbf{Avg.} \\
\midrule
\multirow[t]{4}{*}{OPD init.}
& 1  & 78.0 & 94.2 & 78.2 & 40.2 & 72.7 \\
& 2  & 84.4 & 95.4 & 87.8 & 45.0 & 78.2 \\
& 4  & 58.0 & 59.8 & 53.8 & 5.0  & 44.2 \\
& 10 & 82.6 & 94.6 & 89.0 & 48.2 & 78.6 \\
\midrule
\multirow[t]{4}{*}{\,+ FastOPD}
& 1  & 84.2 & 92.8 & 90.8 & 54.0 & 80.5 \\
& 2  & 84.6 & 95.2 & 94.2 & 57.8 & \textbf{83.0} \\
& 4  & 83.6 & 97.2 & 91.4 & 56.4 & \underline{82.2} \\
& 10 & 80.4 & 92.8 & 93.4 & 60.2 & 81.7 \\
\bottomrule
\end{tabular}
\end{table}

\begin{figure}[h]
    \centering
    \includegraphics[width=0.4\linewidth]{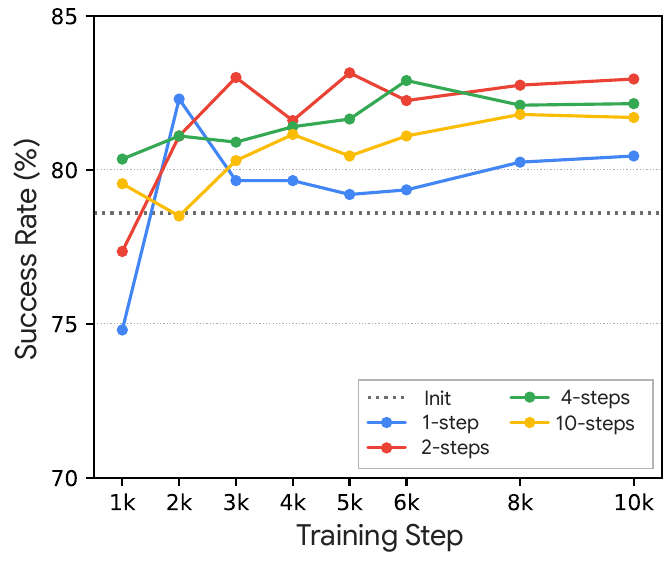}
    \caption{Success rate (\%) on LIBERO over training iterations with OPD-based initialization.}
    \label{fig:qual-sr-steps-opd}
\end{figure}

\section{Few-step Distillation algorithms}
\label{app:alg-details}
In this section, we provide detailed descriptions of the few-step distillation algorithms leveraged in our experiments as baselines.

\subsection{Distribution Matching Distillation}
\label{app:dmd}

Referring to OneDP~\citep{wang2024one}, we design DMD as a one-step generator, and jointly train it with a fake-score critic $\vv_\psi(\vx,t)$
of the same architecture, initialized from the same checkpoint as
$\vv_\theta$. Here, generator and critic are updated alternately at every step.

\textbf{Generator.} A sample is produced with a single Euler step of the
student's own PF-ODE from pure noise $\vx_1\sim\mathcal N(\mathbf 0,\mathbf I)$,
\begin{equation}
    \hat{\vx}_0 = \vx_1 - \vv_\theta(\vx_1,1).
\end{equation}

From the re-noised sample $\vx_t$ from $\hat{\vx}_0$, the teacher and critic denoised estimates are formed as
\begin{equation}
    \hat{\vx}_0^{\mathrm{real}} = \vx_t - t\,\vv_\phi(\vx_t,t), \qquad
    \hat{\vx}_0^{\mathrm{fake}} = \vx_t - t\,\vv_\psi(\vx_t,t).
\end{equation}
The distribution-matching direction is then defined as $\mathbf g_0 = \hat{\vx}_0^{\mathrm{fake}}-\hat{\vx}_0^{\mathrm{real}}$. Here, we normalize it per sample by the mean absolute magnitude of the teacher's own correction,
\begin{equation}
    \mathbf g = \frac{\mathbf g_0}{\mathrm{mean}\big(\big|\hat{\vx}_0^{\mathrm{real}} - \mathrm{sg}[\hat{\vx}_0]\big|\big) + \epsilon},
    \label{eq:dmd-grad}
\end{equation}
where $\mathrm{mean}(\cdot)$ averages over the action-chunk and
action-dimension axes, for training stability. 
The
generator is trained with the surrogate loss whose gradient with respect
to $\hat{\vx}_0$ equals $\mathbf g$:
\begin{equation}
    \mathcal L_{\mathrm{gen}}(\theta) = \tfrac12 \big\|\hat{\vx}_0 - \mathrm{sg}[\hat{\vx}_0 - \mathbf g]\big\|_2^2.
\end{equation}

\textbf{Critic.} The fake-score network $\vv_\psi$ is trained with
ordinary denoising score matching on the generator's output:
\begin{equation}
    \mathcal L_{\mathrm{crit}}(\psi) = \mathbb E_{t', \boldsymbol\epsilon}\Big[\big\|\vv_\psi\big(\vx_{t'} ,t'\big) - \big(\boldsymbol\epsilon - \mathrm{sg}[\hat{\vx}_0]\big)\big\|_2^2\Big],
\end{equation}
where $\vx_{t'}=(1-t')\,\mathrm{sg}[\hat{\vx}_0] + t'\boldsymbol\epsilon$.
\begin{algorithm}[h]
\caption{Training Algorithm for DMD}
\label{alg:dmd}
\begin{algorithmic}[1]
\Require Dataset $\mathcal D$; teacher $\vv_\phi$; generator $\vv_\theta$; critic $\vv_\psi$
\For{each training iteration}
    \State $(\vx_0,c)\sim\mathcal D$, $\vx_1\sim\mathcal N(\mathbf 0,\mathbf I)$
    \State $\hat{\vx}_0 \leftarrow \vx_1 - \vv_\theta(\vx_1,1\mid c)$ \Comment{one-step generation}
    \State $t\sim p(t)$, $\vx_1'\sim\mathcal N(\mathbf 0,\mathbf I)$; \quad $\vx_t \leftarrow (1-t)\,\mathrm{sg}[\hat{\vx}_0] + t\,\vx_1'$
    \State $\hat{\vx}_0^{\mathrm{real}}\leftarrow \vx_t - t\,\vv_\phi(\vx_t,t\mid c)$, \quad $\hat{\vx}_0^{\mathrm{fake}}\leftarrow \vx_t - t\,\vv_\psi(\vx_t,t\mid c)$
    \State $\mathbf g \leftarrow \big(\hat{\vx}_0^{\mathrm{fake}}-\hat{\vx}_0^{\mathrm{real}}\big) \big/ \big(\mathrm{mean}(|\hat{\vx}_0^{\mathrm{real}}-\mathrm{sg}[\hat{\vx}_0]|)+\epsilon\big)$
    \State $\mathcal L_{\mathrm{gen}} \leftarrow \tfrac12\|\hat{\vx}_0 - \mathrm{sg}[\hat{\vx}_0-\mathbf g]\|_2^2$
    \State $\theta \leftarrow \theta - \eta\nabla_\theta \mathcal L_{\mathrm{gen}}$
    \Comment{update generator}
    \State $t'\sim p(t)$, $\boldsymbol\epsilon\sim\mathcal N(\mathbf 0,\mathbf I)$ 
    $\vx_{t'}\leftarrow(1-t')\,\mathrm{sg}[\hat{\vx}_0]+t'\boldsymbol\epsilon$
    \State $\mathcal L_{\mathrm{crit}} \leftarrow \|\vv_\psi(\vx_{t'},t'\mid c) - (\boldsymbol\epsilon - \mathrm{sg}[\hat{\vx}_0])\|_2^2$
    \State $\psi \leftarrow \psi - \eta_\psi\nabla_\psi \mathcal L_{\mathrm{crit}}$
    \Comment{update critic}
\EndFor
\end{algorithmic}
\end{algorithm}

\subsection{Consistency Trajectory Model}
\label{app:ctm}
We follow the Consistency Trajectory Model (CTM) formulation of~\citep{kim2024consistency}, which
generalizes the original Consistency Distillation~\citep{song2023consistency} by allowing anytime-to-anytime jump with a denoising score-matching (DSM) regularizer. 
We train the two-time flow map
$\vf_\theta(\vx,s,t)=\vx+(t-s)\vu_\theta(\vx,s,t)$ together with an EMA target network $\theta^-$,
updated as $\theta^- \leftarrow \mu\theta^- + (1-\mu)\theta$.

For sampled timesteps $1\ge s\ge r\ge0$ and an intermediate solver time $u\sim\mathrm{Unif}[r,s]$,
with $\vx_1\sim\mathcal N(\mathbf 0,\mathbf I)$, $\vx_0\sim p_0$, and
$\vx_s=(1-s)\vx_0+s\vx_1$, the student first jumps from $s$ to $r$,
\begin{equation}
    \hat{\vx}_r^{\theta} = \vf_\theta(\vx_s,s,r),
\end{equation}
while the target integrates the teacher's PF-ODE from $s$ to $u$ with a 1-step Heun solver,
$\vx_u = \mathrm{ODE}_\phi(\vx_s;s,u)$, and then estimates from time $u$ to $r$ with the EMA network,
\begin{equation}
    \vx_r^{-} = \vf_{\theta^-}(\vx_u,u,r).
\end{equation}
Both are then projected to the clean timestep $t=0$ through a stop-gradient student $G_{\mathrm{sg}}(\cdot,r,0):=\vf_{\mathrm{sg}[\theta]}(\cdot,r,0)$, giving
$\hat{\vx}_0^{\theta}=G_{\mathrm{sg}}(\hat{\vx}_r^{\theta},r,0)$ and
$\vx_0^{-}=G_{\mathrm{sg}}(\vx_r^{-},r,0)$.
Thus, the CTM loss is defined as
\begin{equation}
    \mathcal L_{\mathrm{CTM}}(\theta) = \mathbb E\big\|\hat{\vx}_0^{\theta} - \mathrm{sg}[\vx_0^{-}]\big\|_2^2,
\end{equation}
and we add a DSM regularizer anchoring the student's instantaneous velocity $\vv_\theta(\vx_s,s)$
to the teacher's,\footnote{The regularizer can instead target the ground-truth flow velocity
$\vx_1-\vx_0$ (dataset-supervised DSM, as in~\citep{kim2024consistency}); we use the
teacher-supervised variant.}
\begin{equation}
    \mathcal L_{\mathrm{DSM}}(\theta) = \mathbb E\big\|\vv_\theta(\vx_s,s) - \mathrm{sg}[\vv_\phi(\vx_s,s)]\big\|_2^2,
\end{equation}
so the total loss is
$\mathcal L_{\mathrm{}}(\theta) = \mathcal L_{\mathrm{CTM}}(\theta) + w_{\mathrm{dsm}}\,\mathcal L_{\mathrm{DSM}}(\theta)$.
\begin{algorithm}[h]
\caption{Training Algorithm for CTM}
\label{alg:ctm}
\begin{algorithmic}[1]
\Require Dataset $\mathcal D$; teacher $\vv_\phi$; student flow map $\vu_\theta$; EMA target $\theta^-$; decay $\mu$; DSM weight $w_{\mathrm{dsm}}$
\For{each training iteration}
    \State $(\vx_0,c)\sim\mathcal D$, $\vx_1\sim\mathcal N(\mathbf 0,\mathbf I)$, sample $1\ge s\ge r\ge0$, $u\sim\mathrm{Unif}[r,s]$
    \State $\vx_s \leftarrow (1-s)\vx_0+s\vx_1$
    \State $\hat{\vx}_r^\theta \leftarrow \vf_\theta(\vx_s,s,r\mid c)$
    \State $\vx_u \leftarrow \mathrm{ODE}_\phi(\vx_s; s,u \mid c)$
    \State $\vx_r^{-} \leftarrow \vf_{\theta^-}(\vx_u,u,r\mid c)$
    \State $\hat{\vx}_0^\theta \leftarrow \vf_{\mathrm{sg}[\theta]}(\hat{\vx}_r^\theta,r,0\mid c)$;
    $\vx_0^{-} \leftarrow \vf_{\mathrm{sg}[\theta]}(\vx_r^{-},r,0\mid c)$
    \State $\mathcal L_{\mathrm{CTM}} \leftarrow \|\hat{\vx}_0^\theta - \mathrm{sg}[\vx_0^{-}]\|_2^2$
    \State $\mathcal L_{\mathrm{DSM}} \leftarrow \|\vu_\theta(\vx_s,s,s\mid c) - \mathrm{sg}[\vv_\phi(\vx_s,s\mid c)]\|_2^2$
    \State $\theta \leftarrow \theta - \eta\nabla_\theta\big(\mathcal L_{\mathrm{CTM}} + w_{\mathrm{dsm}}\mathcal L_{\mathrm{DSM}}\big)$
    \State $\theta^- \leftarrow \mu\theta^- + (1-\mu)\theta$
\EndFor
\end{algorithmic}
\end{algorithm}

\subsection{improved MeanFlow}
\label{app:imf}

MeanFlow~\citep{geng2025mean} parameterizes the average velocity
$\vu(\vx,s,t):=\frac1{s-t}\int_t^s \vv(\vx_\tau,\tau)\,d\tau$ and train it
through the identity obtained by differentiating the definition along $s$,
\begin{equation}
    \vu(\vx_s,s,t) = \vv(\vx_s,s) - (s-t)\,\frac{d}{ds}\vu(\vx_s,s,t),
    \qquad \frac{d}{ds}\vu = \partial_\vx\vu\cdot\vv(\vx_s,s) + \partial_s\vu.
    \label{eq:meanflow-identity}
\end{equation}
In the original formulation, $\vv(\vx_s,s)$ in
\eqref{eq:meanflow-identity} is taken to be the student's own
instantaneous velocity $\vv_\theta(\vx_s,s):=\vu_\theta(\vx_s,s,s)$, so
the regression target is itself computed by the network being
optimized. 
Improved MeanFlow (iMF)~\citep{geng2026improved} addresses this issue by
reformulating the identity to remove this self-reference. 

We follow this reformulation and further adapt it to distillation by replacing the self-referential term with the frozen teacher velocity,
$\vv_\phi(\vx_s,s)$, used both as the target and
the tangent direction of the total derivative.

Given
$\vx_1\sim\mathcal N(\mathbf 0,\mathbf I)$, $\vx_0\sim p_0$, and interpolant 
$\vx_s$, the total derivative is computed by a
Jacobian--Vector Product (JVP) of $\vu_\theta$ at $(\vx_s,s,t)$ along the
tangent $(\vv_\phi,1,0)$:
\begin{equation}
    \dot{\vu}_\theta := \mathrm{JVP}\big(\vu_\theta;\,(\vx_s,s,t),\,(\vv_\phi,1,0)\big),
\end{equation}
The model then predicts one-step velocity as follow:
\begin{equation}
    \hat{\vv}_\theta = \vu_\theta(\vx_s,s,t) + (s-t)\,\mathrm{sg}[\dot{\vu}_\theta].
\end{equation}
Note that MF loss reduces to standard flow matching when $t=s$. 
The loss regresses $\hat{\vv}_\theta$ onto the teacher target with an
elementwise Pseudo-Huber loss,
\begin{equation}
    \mathcal L_{\mathrm{iMF}}(\theta) = \mathbb E\big[\rho_\delta(\hat{\vv}_\theta - \vv_\phi)\big],
    \qquad \rho_\delta(e) = \delta^2\Big(\sqrt{1+e^2/\delta^2}-1\Big),
\end{equation}
applied element-wise and averaged over the action dimensions.

\begin{algorithm}[h]
\caption{Training Algorithm for iMF}
\label{alg:imf}
\begin{algorithmic}[1]
\Require Dataset $\mathcal D$; teacher $\vv_\phi$; student flow map $\vu_\theta$; FM prob. $p_{t=s}$; Huber scale $\delta$
\For{each training iteration}
    \State $(\vx_0,c)\sim\mathcal D$, $\vx_1\sim\mathcal N(\mathbf 0,\mathbf I)$, sample $1\ge s\ge t\ge0$
    \State With prob. $p_{t=s}$: $t \leftarrow s$ %
    \State $\vx_s \leftarrow (1-s)\vx_0 + s\vx_1$
    \State $\vu_\theta(\vx_s,s,t\mid c),\ \dot{\vu}_\theta \leftarrow \mathrm{sg}\left[\mathrm{JVP}\big(\vu_\theta(\cdot,\cdot,t\mid c);\,(\vx_s,s),\,(\vv_\phi,1)\big)\right]$
    \State $\hat{\vv}_\theta \leftarrow \vu_\theta(\vx_s,s,t\mid c) + (s-t)\,\dot{\vu}_\theta$
    \State $\mathcal L_{\mathrm{iMF}} \leftarrow \rho_\delta(\hat{\vv}_\theta - \vv_\phi)$ 
    \State $\theta \leftarrow \theta - \eta\nabla_\theta \mathcal L_{\mathrm{iMF}}$
\EndFor
\end{algorithmic}
\end{algorithm}

\newpage
\raggedbottom
\section{Detailed Results in Simulation Environments}
\label{app:detailed_results}
\begin{table}[H]
\centering
\scriptsize
\setlength{\tabcolsep}{5.0pt}
\renewcommand{\arraystretch}{1.05}

\caption{
Per-task success rates (\%) on RoboTwin~2.0 at 10 denoising steps, under clean and randomized settings. Ours (X) denotes our method trained with X as the teacher.
}
\label{tab:robotwin_per_task_10step}

\resizebox{\linewidth}{!}{%
\begin{tabular}{l cc cc cc cc cc}
\toprule

\multirow{3}{*}{\textbf{Task}}
& \multicolumn{2}{c}{\textbf{LingBot-VLA}}
& \multicolumn{2}{c}{\textbf{Fast-WAM}}
& \multicolumn{2}{c}{\textbf{SmolVLA}}
& \multicolumn{2}{c}{\textbf{Ours (LingBot-VLA)}}
& \multicolumn{2}{c}{\textbf{Ours (Fast-WAM)}} \\

\cmidrule(lr){2-3}\cmidrule(lr){4-5}\cmidrule(lr){6-7}\cmidrule(lr){8-9}\cmidrule(lr){10-11}

& \textbf{Clean} & \textbf{Random}
& \textbf{Clean} & \textbf{Random}
& \textbf{Clean} & \textbf{Random}
& \textbf{Clean} & \textbf{Random}
& \textbf{Clean} & \textbf{Random} \\

\midrule

adjust\_bottle & 98 & 90 & 100 & 100 & 100 & 98 & 100 & 98 & 100 & 98 \\
beat\_block\_hammer & 78 & 78 & 100 & 96 & 66 & 66 & 56 & 40 & 76 & 78 \\
blocks\_ranking\_rgb & 96 & 92 & 100 & 100 & 50 & 46 & 50 & 58 & 30 & 30 \\
blocks\_ranking\_size & 74 & 66 & 88 & 96 & 20 & 22 & 28 & 26 & 20 & 10 \\
click\_alarmclock & 70 & 42 & 100 & 100 & 90 & 94 & 34 & 50 & 92 & 96 \\
click\_bell & 62 & 78 & 100 & 100 & 96 & 92 & 62 & 54 & 96 & 86 \\
dump\_bin\_bigbin & 92 & 98 & 100 & 96 & 96 & 70 & 92 & 84 & 90 & 82 \\
grab\_roller & 100 & 98 & 100 & 100 & 96 & 96 & 100 & 100 & 100 & 100 \\
handover\_block & 88 & 82 & 96 & 90 & 28 & 22 & 18 & 32 & 28 & 28 \\
handover\_mic & 100 & 88 & 100 & 98 & 80 & 74 & 76 & 82 & 84 & 88 \\
hanging\_mug & 48 & 50 & 64 & 56 & 10 & 16 & 4 & 12 & 6 & 8 \\
lift\_pot & 100 & 100 & 100 & 100 & 84 & 72 & 62 & 68 & 80 & 80 \\
move\_can\_pot & 66 & 86 & 88 & 98 & 22 & 44 & 18 & 44 & 10 & 32 \\
move\_pillbottle\_pad & 98 & 94 & 96 & 100 & 54 & 54 & 50 & 56 & 52 & 56 \\
move\_playingcard\_away & 100 & 100 & 98 & 98 & 78 & 84 & 88 & 88 & 68 & 80 \\
move\_stapler\_pad & 56 & 66 & 74 & 66 & 24 & 14 & 30 & 22 & 26 & 18 \\
open\_laptop & 96 & 94 & 94 & 100 & 88 & 86 & 78 & 78 & 72 & 88 \\
open\_microwave & 58 & 38 & 48 & 46 & 18 & 36 & 20 & 54 & 22 & 30 \\
pick\_diverse\_bottles & 82 & 76 & 82 & 90 & 42 & 32 & 60 & 44 & 50 & 40 \\
pick\_dual\_bottles & 94 & 88 & 100 & 100 & 50 & 42 & 50 & 42 & 64 & 64 \\
place\_a2b\_left & 84 & 80 & 100 & 98 & 38 & 48 & 46 & 42 & 40 & 42 \\
place\_a2b\_right & 76 & 82 & 94 & 98 & 40 & 36 & 38 & 46 & 38 & 44 \\
place\_bread\_basket & 84 & 90 & 94 & 90 & 68 & 68 & 66 & 56 & 60 & 60 \\
place\_bread\_skillet & 92 & 70 & 92 & 92 & 62 & 54 & 66 & 62 & 68 & 70 \\
place\_burger\_fries & 94 & 94 & 94 & 100 & 96 & 84 & 96 & 92 & 98 & 90 \\
place\_can\_basket & 82 & 72 & 76 & 68 & 62 & 44 & 30 & 24 & 32 & 8 \\
place\_cans\_plasticbox & 100 & 100 & 100 & 100 & 82 & 84 & 58 & 52 & 88 & 82 \\
place\_container\_plate & 98 & 96 & 100 & 96 & 90 & 94 & 98 & 96 & 90 & 88 \\
place\_dual\_shoes & 80 & 84 & 94 & 88 & 50 & 62 & 58 & 76 & 42 & 50 \\
place\_empty\_cup & 100 & 98 & 100 & 100 & 96 & 98 & 98 & 98 & 86 & 84 \\
place\_fan & 88 & 80 & 98 & 96 & 38 & 50 & 38 & 44 & 42 & 56 \\
place\_mouse\_pad & 86 & 90 & 82 & 84 & 26 & 42 & 34 & 42 & 30 & 36 \\
place\_object\_basket & 92 & 84 & 88 & 82 & 68 & 62 & 82 & 80 & 70 & 48 \\
place\_object\_scale & 82 & 82 & 100 & 94 & 46 & 38 & 50 & 58 & 36 & 52 \\
place\_object\_stand & 92 & 96 & 98 & 98 & 60 & 82 & 62 & 82 & 70 & 86 \\
place\_phone\_stand & 80 & 96 & 92 & 96 & 48 & 74 & 46 & 76 & 60 & 70 \\
place\_shoe & 96 & 96 & 88 & 92 & 86 & 88 & 90 & 88 & 76 & 78 \\
press\_stapler & 94 & 96 & 90 & 98 & 78 & 74 & 66 & 64 & 64 & 72 \\
put\_bottles\_dustbin & 90 & 84 & 90 & 92 & 36 & 42 & 34 & 44 & 30 & 38 \\
put\_object\_cabinet & 90 & 78 & 88 & 90 & 60 & 60 & 60 & 54 & 72 & 64 \\
rotate\_qrcode & 88 & 88 & 94 & 82 & 64 & 64 & 64 & 68 & 56 & 52 \\
scan\_object & 96 & 94 & 90 & 94 & 60 & 70 & 46 & 48 & 54 & 54 \\
shake\_bottle & 100 & 100 & 100 & 100 & 100 & 98 & 98 & 98 & 100 & 96 \\
shake\_bottle\_horizontally & 100 & 98 & 100 & 100 & 98 & 98 & 98 & 98 & 96 & 98 \\
stack\_blocks\_three & 92 & 94 & 94 & 100 & 4 & 20 & 4 & 6 & 8 & 16 \\
stack\_blocks\_two & 100 & 100 & 100 & 100 & 40 & 50 & 64 & 66 & 42 & 58 \\
stack\_bowls\_three & 86 & 72 & 82 & 76 & 48 & 56 & 36 & 46 & 52 & 50 \\
stack\_bowls\_two & 96 & 98 & 92 & 98 & 70 & 84 & 72 & 74 & 80 & 74 \\
stamp\_seal & 88 & 92 & 90 & 92 & 26 & 34 & 40 & 30 & 18 & 32 \\
turn\_switch & 56 & 56 & 68 & 68 & 40 & 48 & 32 & 36 & 50 & 38 \\

\midrule

\textbf{Average} & 86.76 & 84.88 & 91.92 & 91.84 & 59.44 & 61.32 & 56.92 & 59.56 & 58.28 & 59.56 \\

\bottomrule
\end{tabular}
}
\end{table}

\begin{table}[H]
\centering
\scriptsize
\setlength{\tabcolsep}{3.0pt}
\renewcommand{\arraystretch}{1.1}

\caption{
Per-task success rates (\%) of our method with LingBot-VLA and Fast-WAM as teachers on RoboTwin~2.0 at 1, 2, and 4 denoising steps, under clean and randomized settings. Ours (X) denotes our method trained with X as the teacher.
}
\label{tab:robotwin_per_task_stepwise_ours}

\resizebox{\textwidth}{!}{
\begin{tabular}{l cc cc cc cc cc cc}
\toprule

\multirow{3}{*}{\textbf{Task}}
& \multicolumn{6}{c}{\textbf{Ours (LingBot-VLA)}}
& \multicolumn{6}{c}{\textbf{Ours (Fast-WAM)}} \\

\cmidrule(lr){2-7}\cmidrule(lr){8-13}

& \multicolumn{2}{c}{\textbf{1-step}}
& \multicolumn{2}{c}{\textbf{2-step}}
& \multicolumn{2}{c}{\textbf{4-step}}
& \multicolumn{2}{c}{\textbf{1-step}}
& \multicolumn{2}{c}{\textbf{2-step}}
& \multicolumn{2}{c}{\textbf{4-step}} \\

\cmidrule(lr){2-3}\cmidrule(lr){4-5}\cmidrule(lr){6-7}\cmidrule(lr){8-9}\cmidrule(lr){10-11}\cmidrule(lr){12-13}

& \textbf{Clean} & \textbf{Random}
& \textbf{Clean} & \textbf{Random}
& \textbf{Clean} & \textbf{Random}
& \textbf{Clean} & \textbf{Random}
& \textbf{Clean} & \textbf{Random}
& \textbf{Clean} & \textbf{Random} \\

\midrule

adjust\_bottle & 94 & 96 & 100 & 98 & 100 & 98 & 94 & 98 & 100 & 96 & 100 & 98 \\
beat\_block\_hammer & 72 & 58 & 84 & 68 & 66 & 66 & 56 & 50 & 76 & 68 & 80 & 74 \\
blocks\_ranking\_rgb & 34 & 32 & 38 & 44 & 52 & 56 & 34 & 32 & 44 & 32 & 32 & 32 \\
blocks\_ranking\_size & 6 & 6 & 20 & 18 & 28 & 16 & 12 & 8 & 12 & 10 & 20 & 14 \\
click\_alarmclock & 24 & 46 & 22 & 38 & 36 & 54 & 90 & 82 & 82 & 92 & 90 & 90 \\
click\_bell & 94 & 92 & 82 & 88 & 66 & 74 & 100 & 100 & 98 & 94 & 98 & 92 \\
dump\_bin\_bigbin & 60 & 56 & 76 & 72 & 84 & 78 & 76 & 76 & 70 & 68 & 76 & 86 \\
grab\_roller & 96 & 98 & 96 & 98 & 98 & 96 & 100 & 98 & 100 & 100 & 100 & 98 \\
handover\_block & 12 & 12 & 14 & 16 & 26 & 22 & 20 & 12 & 22 & 24 & 30 & 20 \\
handover\_mic & 58 & 66 & 70 & 64 & 72 & 70 & 58 & 68 & 72 & 74 & 86 & 92 \\
hanging\_mug & 6 & 6 & 10 & 12 & 6 & 18 & 2 & 4 & 4 & 2 & 6 & 10 \\
lift\_pot & 78 & 80 & 92 & 86 & 84 & 86 & 70 & 74 & 74 & 74 & 82 & 76 \\
move\_can\_pot & 38 & 32 & 22 & 46 & 32 & 46 & 22 & 26 & 26 & 40 & 22 & 20 \\
move\_pillbottle\_pad & 50 & 40 & 70 & 70 & 72 & 68 & 44 & 52 & 60 & 62 & 60 & 56 \\
move\_playingcard\_away & 70 & 82 & 86 & 84 & 86 & 86 & 68 & 78 & 72 & 78 & 76 & 84 \\
move\_stapler\_pad & 20 & 22 & 20 & 20 & 24 & 20 & 14 & 16 & 16 & 22 & 16 & 32 \\
open\_laptop & 92 & 94 & 88 & 90 & 82 & 84 & 78 & 86 & 76 & 90 & 76 & 92 \\
open\_microwave & 6 & 12 & 10 & 12 & 12 & 26 & 10 & 4 & 14 & 20 & 14 & 26 \\
pick\_diverse\_bottles & 56 & 36 & 66 & 44 & 58 & 44 & 48 & 40 & 48 & 36 & 50 & 42 \\
pick\_dual\_bottles & 74 & 58 & 60 & 62 & 58 & 52 & 52 & 64 & 70 & 58 & 68 & 68 \\
place\_a2b\_left & 38 & 42 & 30 & 44 & 38 & 44 & 32 & 40 & 34 & 42 & 40 & 42 \\
place\_a2b\_right & 28 & 34 & 36 & 38 & 38 & 38 & 28 & 38 & 46 & 44 & 40 & 50 \\
place\_bread\_basket & 52 & 52 & 58 & 56 & 64 & 60 & 64 & 56 & 70 & 64 & 70 & 64 \\
place\_bread\_skillet & 80 & 64 & 70 & 58 & 76 & 68 & 70 & 52 & 76 & 62 & 68 & 68 \\
place\_burger\_fries & 96 & 90 & 92 & 92 & 98 & 94 & 90 & 90 & 98 & 96 & 98 & 96 \\
place\_can\_basket & 40 & 38 & 58 & 40 & 44 & 36 & 42 & 24 & 34 & 22 & 26 & 16 \\
place\_cans\_plasticbox & 58 & 60 & 82 & 78 & 80 & 76 & 52 & 44 & 72 & 78 & 84 & 78 \\
place\_container\_plate & 84 & 90 & 96 & 88 & 90 & 94 & 88 & 92 & 76 & 82 & 92 & 92 \\
place\_dual\_shoes & 52 & 52 & 48 & 70 & 56 & 68 & 50 & 54 & 56 & 62 & 58 & 56 \\
place\_empty\_cup & 78 & 84 & 94 & 96 & 90 & 98 & 72 & 56 & 70 & 68 & 86 & 76 \\
place\_fan & 16 & 16 & 40 & 48 & 48 & 44 & 24 & 14 & 38 & 52 & 44 & 44 \\
place\_mouse\_pad & 28 & 24 & 26 & 46 & 28 & 50 & 20 & 28 & 30 & 36 & 38 & 26 \\
place\_object\_basket & 66 & 58 & 80 & 72 & 70 & 74 & 44 & 30 & 56 & 32 & 66 & 56 \\
place\_object\_scale & 40 & 46 & 52 & 60 & 54 & 56 & 32 & 34 & 38 & 58 & 44 & 50 \\
place\_object\_stand & 66 & 80 & 66 & 82 & 70 & 86 & 70 & 64 & 70 & 88 & 74 & 90 \\
place\_phone\_stand & 26 & 52 & 44 & 58 & 54 & 72 & 40 & 48 & 40 & 58 & 48 & 62 \\
place\_shoe & 82 & 84 & 84 & 86 & 84 & 90 & 66 & 80 & 74 & 82 & 68 & 80 \\
press\_stapler & 86 & 78 & 74 & 74 & 72 & 74 & 80 & 84 & 74 & 72 & 74 & 70 \\
put\_bottles\_dustbin & 8 & 10 & 20 & 10 & 18 & 34 & 6 & 4 & 14 & 22 & 34 & 38 \\
put\_object\_cabinet & 40 & 28 & 52 & 46 & 56 & 54 & 52 & 46 & 68 & 62 & 58 & 52 \\
rotate\_qrcode & 66 & 54 & 60 & 64 & 60 & 72 & 58 & 52 & 64 & 54 & 52 & 58 \\
scan\_object & 58 & 48 & 58 & 64 & 52 & 64 & 64 & 50 & 54 & 54 & 50 & 62 \\
shake\_bottle & 98 & 98 & 100 & 100 & 96 & 96 & 98 & 96 & 96 & 98 & 98 & 96 \\
shake\_bottle\_horizontally & 96 & 94 & 98 & 98 & 100 & 98 & 96 & 94 & 100 & 98 & 94 & 94 \\
stack\_blocks\_three & 6 & 2 & 2 & 4 & 10 & 14 & 10 & 0 & 8 & 12 & 6 & 10 \\
stack\_blocks\_two & 20 & 16 & 32 & 36 & 46 & 56 & 30 & 36 & 42 & 50 & 56 & 58 \\
stack\_bowls\_three & 6 & 12 & 36 & 30 & 50 & 34 & 16 & 14 & 28 & 24 & 42 & 40 \\
stack\_bowls\_two & 68 & 46 & 74 & 70 & 76 & 86 & 62 & 48 & 58 & 70 & 84 & 68 \\
stamp\_seal & 12 & 20 & 20 & 38 & 30 & 46 & 14 & 18 & 16 & 36 & 10 & 28 \\
turn\_switch & 40 & 48 & 46 & 38 & 40 & 26 & 38 & 36 & 40 & 40 & 40 & 38 \\

\midrule

\textbf{Average} & 51.48 & 50.88 & 57.08 & 58.28 & 58.60 & 61.24 & 51.12 & 49.80 & 55.52 & 57.16 & 58.48 & 59.20 \\

\bottomrule
\end{tabular}
}

\end{table}

\newpage
\section{Qualitative Results}
\label{app:qualitative}
We provide qualitative results in simulation (Fig.~\ref{fig:robotwin-qual}) and on the real-robot (Fig.~\ref{fig:real-qual}) experiments.

\begin{figure}[h]
    \centering
    \includegraphics[width=\linewidth]{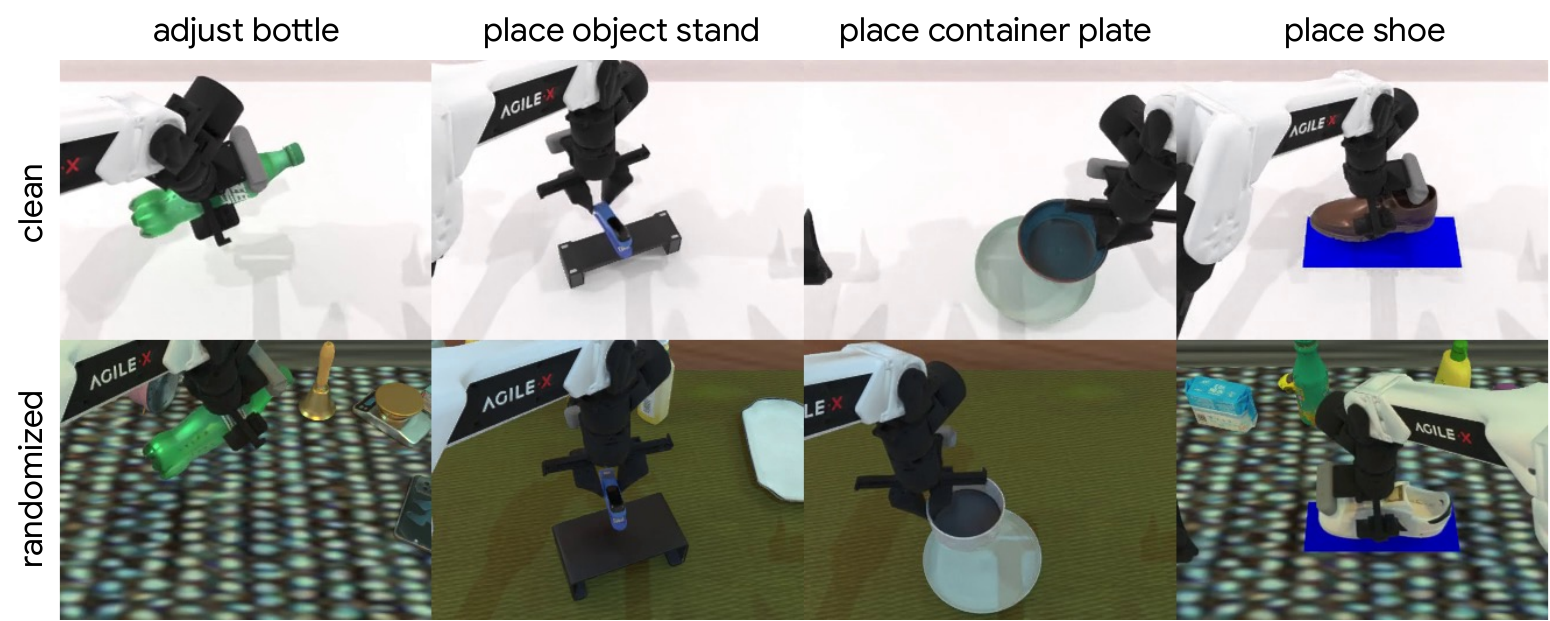}
    \caption{\textbf{Qualitative results on RoboTwin~2.0.} FastOPD distilled from LingBot-VLA (NFE=1) executing four tasks under the clean (top) and randomized (bottom) settings, where the randomized setting introduces cluttered distractors, textured tabletops, and varied backgrounds.}
    \label{fig:robotwin-qual}
\end{figure}

\begin{figure}[h]
    \centering
    \includegraphics[width=\linewidth]{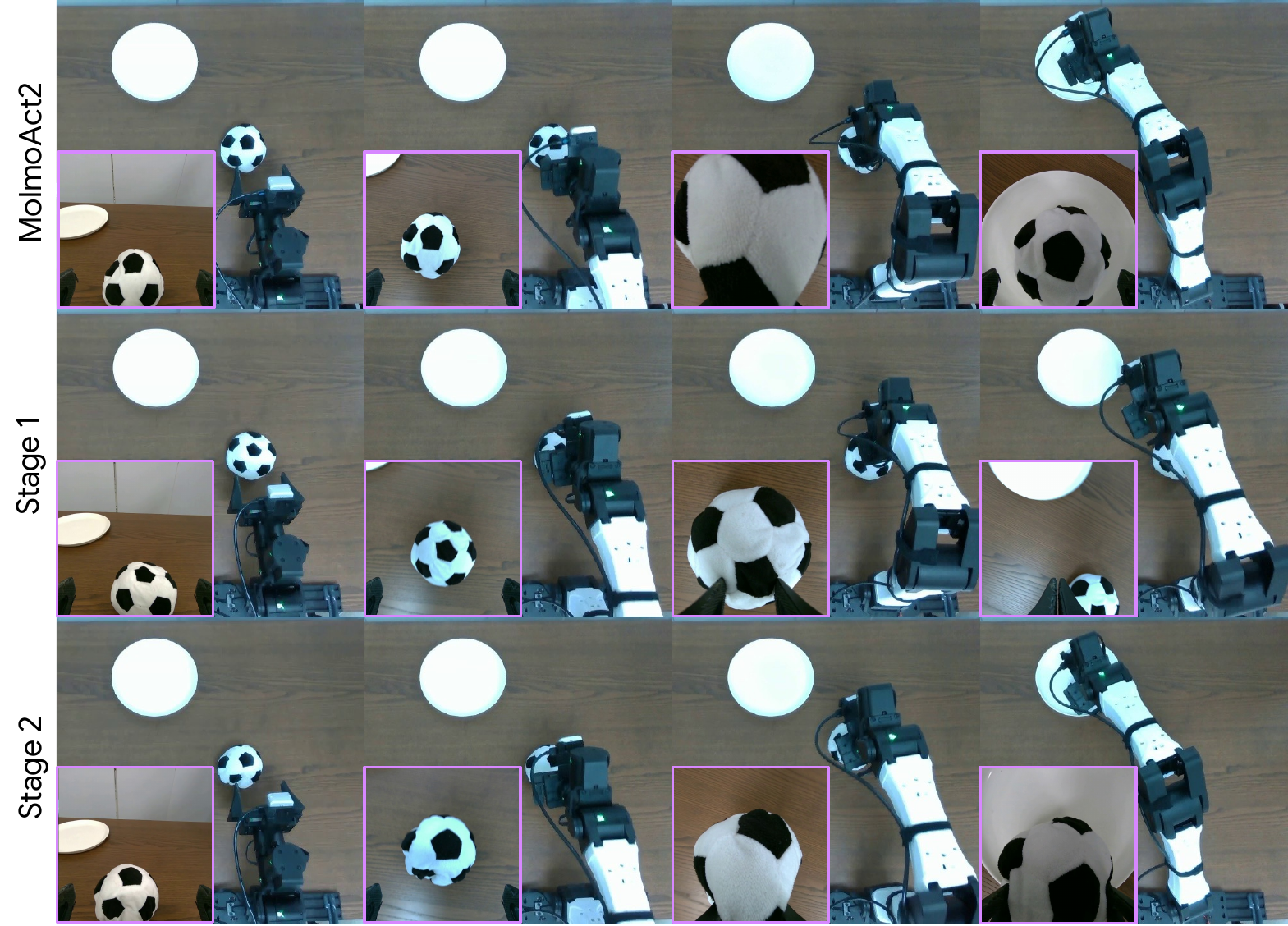}
    \caption{\textbf{Qualitative results on the real-world task.} Rollouts of the MolmoAct2 teacher and the student after Stage~1 (initialization) and Stage~2 (FastOPD) on the YAM robot arm, in which the robot picks up a ball and places it on a plate. Time progresses from left to right; insets show the wrist-camera view.}
    \label{fig:real-qual}
\end{figure}

\end{document}